\documentclass[twoside]{article}

\usepackage[preprint]{aistats2026}

\usepackage[T1]{fontenc}
\usepackage{lmodern}
\usepackage{microtype}

\usepackage{amsmath,amssymb,amsfonts,mathtools}
\usepackage{amsthm}

\theoremstyle{plain}
\newtheorem{proposition}{Proposition}

\theoremstyle{definition}

\theoremstyle{remark}
\newtheorem{remark}{Remark}

\usepackage{bm}
\usepackage{bbm}
\usepackage{graphicx}
\usepackage{booktabs}
\usepackage{multirow}
\usepackage{array}
\usepackage{siunitx}
\usepackage{comment}
\usepackage{algorithm}
\usepackage{algorithmic}

\usepackage[round]{natbib}

\usepackage{xcolor}
\usepackage[hidelinks]{hyperref}
\hypersetup{
  colorlinks=true,
  linkcolor=blue,
  citecolor=teal,
  urlcolor=blue
}
\newcommand{\R}{\mathbb{R}}
\newcommand{\E}{\mathbb{E}}
\newcommand{\Cov}{\mathrm{Cov}}
\newcommand{\tr}{\mathrm{tr}}
\newcommand{\diag}{\mathrm{diag}}
\newcommand{\Io}{\bm{\mathcal{I}}_{\mathrm{obs}}}
\newcommand{\Ic}{\bm{\mathcal{I}}_{\mathrm{comp}}}
\newcommand{\Imiss}{\bm{\mathcal{I}}_{\mathrm{miss}}}
\newcommand{\Iohat}{\bm{\widehat{\mathcal{I}}}_{\mathrm{obs}}}
\newcommand{\Iostephat}{\bm{\widehat{\mathcal{I}}}_{\mathrm{obs,step}}}

\hypersetup{pdfauthor={Andre Herz, Georgia Koppe, Daniel Durstewitz}}

\begin{document}

\runningtitle{Teacher Forcing as Generalized Bayes}

\twocolumn[
    \aistatstitle{Teacher Forcing as Generalized Bayes: Optimization Geometry Mismatch in Switching Surrogates for Chaotic Dynamics}
    \vspace{-2em}
    \aistatsauthor{Andre Herz$^{1,*}$ \And
    Daniel Durstewitz$^{1,2}$ \And
    Georgia Koppe$^{1,3,4}$ 
    }
    \vspace{1em}
    \aistatsaddress{
    $^{1}$Interdisciplinary Center for Scientific Computing, Heidelberg University, Germany \\
    $^{2}$Dept. of Theoretical Neuroscience, Central Institute of Mental Health (CIMH), Mannheim, Germany\\
    $^{3}$Hector Institute for AI in Psychiatry and Dept. of Psychiatry and Psychotherapy, CIMH, Mannheim, Germany\\
    $^{4}$Hertie Institute for AI in Brain Health, University of Tübingen, Germany\\
    $^{*}$Correspondence to: \texttt{andre.herz@iwr.uni-heidelberg.de}
    }
]

\begingroup
\renewcommand\thefootnote{}
\footnotetext{This work was presented at the Workshop on Optimisation and Post-Bayesian Inference in Machine Learning at AISTATS 2026.}
\endgroup
\vspace{-0.8em}

\begin{abstract}
\vspace{-0.5em}
Identity teacher forcing (ITF) enables stable training of deterministic recurrent surrogates for chaotic dynamical systems and has been highly effective for dynamical systems reconstruction (DSR) with recurrent neural networks (RNNs), including interpretable almost-linear RNNs (AL-RNNs). However, as an intervention-based prediction loss (and thus a generalized Bayes update), teacher forcing need not match the free-running model’s marginal likelihood geometry. We compare the objective-induced curvatures of ITF and marginal likelihood in a probabilistic switching augmentation of AL-RNNs, estimating ambiguity-aware observed information via Louis’ identity. In the switching setting studied here, conditioning on a single forced regime path (as ITF does) inflates curvature, while marginal likelihood curvature is reduced by a missing-information correction when multiple switching explanations remain plausible. In Lorenz--63 experiments, windowed evidence fine-tuning improves held-out evidence but can degrade dynamical quantities of interest (QoIs) relative to ITF-pretrained models.
\end{abstract}
\vspace{-0.5em}
\section{Introduction}
\vspace{-0.5em}
In dynamical systems reconstruction (DSR), one aims to learn surrogate models of unknown dynamical systems from time series data. The central goal is to recover the long-term behavior of the underlying system, beyond short-horizon prediction, commonly assessed via dynamical invariants and statistics such as Lyapunov exponents (LEs), the physical invariant measure governing time-averages, and topological and geometric properties of the attractor \citep{durstewitz2023reconstructing,goering2024oodg}. 

In practice, such models are typically trained using short-horizon prediction objectives. In chaotic systems, however, short-horizon predictive fit can be statistically decoupled from long-run invariant structure \citep{park2024statisticallyaccurate}. Positive LEs cause small state errors to grow exponentially, rendering trajectory-level gradients propagated through backpropagation through time (BPTT) ill-conditioned due to exploding gradients \citep{mikhaeil2022chaosrnn}. As a result, direct long-rollout training of recurrent neural network (RNN) surrogates is numerically unstable. This creates a training paradox: we seek long-run invariant behavior, yet optimization breaks down at long horizons.

A common workaround relies on sparse teacher forcing (STF) \citep{mikhaeil2022chaosrnn}. In STF, the latent state of a deterministic model is periodically fed a control signal derived from observations, anchoring rollouts to the data and thereby stabilizing gradient propagation \citep{mikhaeil2022chaosrnn}. This enables effective learning even when the autonomous (free-running) model would quickly diverge. Empirically, models trained under this principle have demonstrated striking success in reconstructing chaotic attractors, including almost-linear RNNs (AL-RNNs) with minimal nonlinear structure \citep{hess2023gtf,brenner2024alrnn}.

Nevertheless, STF optimizes prediction error under an explicit intervention that alters the rollout distribution, meaning it generally does not correspond to maximum-likelihood training of the unforced generative model. From a statistical viewpoint, STF instead defines a task loss that can be interpreted as a generalized Bayes (Gibbs) posterior, inducing its own local geometry in parameter space \citep{catoni2007pacbayes, bissiri2016generalbayes}. Under this generalized framework, this local geometry is fundamental: it underpins second-order optimization, Laplace uncertainty quantification, local identifiability analysis, and optimal experimental design.

By contrast, stochastic state-space models (SSMs) start from an explicit autonomous generative model and define a marginal likelihood by integrating over latent variables \citep{durbin2001ssm}. The local geometry of the resulting posterior is summarized by the observed information. For latent-variable SSMs, this can be computed via Louis' identity \citep{louis1982}.
As a modeling stance consistent with representing probabilistic beliefs, stochastic SSMs provide a principled default for real-world time series, since partial observability, measurement noise, and unmodeled perturbations are ubiquitous. Yet deterministic STF still dominates many chaotic DSR benchmarks \citep{hess2023gtf}.

We compare the local geometries induced by STF and marginal likelihood-based learning via ambiguity-aware observed information in a probabilistic switching SSM. AL-RNNs provide an interpretable testbed with explicit switching codes and tractable (closed-form) Fisher-information expressions aligned with ITF training. We then test whether windowed marginal likelihood optimization aligns with long-horizon dynamical invariants.
\paragraph{Related Work.}
Chaotic DSR surrogates are commonly trained with sparse/generalized teacher forcing to stabilize BPTT \citep{mikhaeil2022chaosrnn,hess2023gtf,brenner2024alrnn}. Probabilistic switching SSMs (e.g., SLDS/rSLDS) instead learn by marginal likelihood objectives with approximate inference/EM/SMC \citep{ghahramani2000switchingssm,murphy1998switchingkf,linderman2017rslds}. We connect these views by (i) treating STF as a generalized Bayes update \citep{catoni2007pacbayes,bissiri2016generalbayes} and (ii) estimating ambiguity-aware likelihood curvature via Louis’ identity \citep{louis1982}.
\vspace{-0.5em}
\section{Two Posteriors and Local Curvatures}
\vspace{-0.5em}
\label{sec:twoposteriors}
\subsection{AL-RNNs and Switching Dynamics}
AL-RNNs combine linear recurrent structure with a small number of gated nonlinear units, resulting in dynamics that switch between linear regimes \citep{brenner2024alrnn}. Let $\bm{z}_t\in\R^{M}$ be the latent state at time $t$. The AL-RNN transition model is
\begin{equation}
\label{eq:alrnn}
\bm{z}_{t+1}
= F_{\bm\theta}(\bm{z}_t)
:= \bm{A} \bm{z}_t + \bm{W}\,\bm\phi^\ast(\bm{z}_t) + \bm{h},
\end{equation}
where $\bm{A}, \bm{W} \in \mathbb{R}^{M \times M}$ are diagonal and full, respectively, and $\bm{h} \in \mathbb{R}^M$ is a bias term. $\bm\phi^\ast$ is identity on the first $M-P$ coordinates and ReLU on the last $P$. The architecture induces \emph{switching dynamics}. Define the binary switching code $\bm c_t\in\{0,1\}^P$ by $c_{t,j}=\mathbbm{1}\{z_{t,M-P+j}>0\}$, which specifies the active linear regime at time $t$. Let $\bm{D}(\bm{c}_t)$ denote the diagonal matrix that applies the corresponding gates:
\begin{equation}
\label{eq:Dc}
\bm{D}(\bm{c}_t)
=\diag\big( \underbrace{1,\dots,1}_{M-P},\,c_{t,1},\dots,c_{t,P} \big).
\end{equation}
Then the dynamics are piecewise affine and can be written as
\begin{equation}
\label{eq:pwa}
\bm{z}_{t+1}=\big(\bm{A}+\bm{W}\bm{D}(\bm{c}_t)\big)\bm{z}_t+\bm{h}.
\end{equation}
The induced switching code (gate) sequence $\{\bm {c}_t\}$ defines a symbolic partition of state space and determines which linear regime governs local evolution \citep{brenner2024alrnn}. Near switching boundaries, multiple gate configurations may be locally plausible, giving rise to latent uncertainty about which linear regime generated the data. We will empirically show that whether or not this uncertainty is properly accounted for has direct consequences for the local curvature of the training objective.

\subsection{ITF Loss Posterior and Local Curvature}
As training protocol, we use \emph{identity TF} (ITF), a variant of STF where the first $N$ latent coordinates (the observed subspace) are overwritten at forcing times $\mathcal T_\tau$ \citep{mikhaeil2022chaosrnn, brenner22dendritic}.
Let $\bm{B}=[\bm{I}_N\ \bm{0}]\in\R^{N\times M}$ denote the projection onto observed coordinates such that $\hat{\bm{x}}_t=\bm{B}\bm{z}_t$, and let $\mathcal T_\tau=\{t:t\equiv 0\ (\mathrm{mod}\ \tau),\,t>0\}$ denote the forcing times.
The ITF-modified rollout can be written schematically as
\begin{equation}
\label{eq:itf_rollout}
\tilde{\bm z}_t = \bm z_t + \bm B^{\mathsf T}\!\big(\bm x_t - \bm B\bm z_t\big),
\quad
\bm z_{t+1} =
\begin{cases}
F_{\bm\theta}(\tilde{\bm{z}}_t), & t\in\mathcal{T}_\tau,\\
F_{\bm\theta}(\bm{z}_t), & \text{else.}
\end{cases}
\end{equation}
STF can also be generalized using a forcing strength parameter to linearly interpolate between $\bm{z}_t$ and $\tilde{\bm{z}}_t$ \citep{hess2023gtf}. Here, we focus on ``strict'' forcing. Training based on BPTT then minimizes the one-step prediction loss along an ITF-modified trajectory:
\begin{equation}
\label{eq:tf_loss}
\mathcal{L}_{\mathrm{ITF}}(\bm\theta;\bm{x}_{1:T})
= \frac{1}{T-1}\sum_{t=1}^{T-1}\big\|\bm{B}\bm{z}_{t+1}-\bm{x}_{t+1}\big\|_2^2.
\end{equation}
This loss evaluates prediction error under an intervention that alters the system's dynamics during training. It therefore does not correspond to the maximum likelihood estimator for the autonomous generative model one uses at test time (i.e., free-running dynamics).
A useful framework for interpreting this objective statistically is generalized Bayes \citep{catoni2007pacbayes,bissiri2016generalbayes}. In chaotic DSR, global likelihoods are brittle, and any surrogate model is necessarily misspecified relative to the underlying physical system. Rather than assuming a single canonical likelihood, it is therefore natural to treat training objectives as scoring rules that induce generalized posteriors and, consequently, distinct local parameter geometries. Interpreting $\mathcal L_{\mathrm{ITF}}$ as such a scoring rule, ITF corresponds to the Gibbs posterior
\begin{equation}
\label{eq:gibbs}
\pi_{\mathrm{ITF}}(\bm\theta\mid\bm x)
\propto
\pi_0(\bm\theta)\exp\!\big(-\beta\,\mathcal L_{\mathrm{ITF}}(\bm\theta;\bm x)\big),
\end{equation}
where $\pi_0$ is a prior and $\beta>0$ is an inverse-temperature (loss-scale) parameter. We define the ITF curvature proxy as the generalized per-step Gauss–Newton/Fisher matrix aligned to ITF:
\begin{equation}
\label{eq:tfgn}
\bm{\mathcal{I}}_{\mathrm{ITF}}
=
\frac{1}{T-1}\sum_{t=1}^{T-1}
\bm{J}_t^{\mathsf T}\,\bm{\Lambda}^{-1}\,\bm{J}_t,
\quad
\bm{J}_t:=\frac{\partial(\bm{B}\bm{z}_{t+1})}{\partial\bm\theta},
\end{equation}
where $\bm{\Lambda}$ is a chosen loss weighting. 
At forcing times $\mathcal{T}_\tau$, the sensitivity recursion is modified to reflect the overwrite of the observed coordinates (Appendix~\ref{sec:s:tf_fim}). Crucially, $\bm J_t$ is evaluated along the forced rollout’s induced switching code sequence $\{\bm c_t\}$. Thus, the local geometry is conditioned on a single code path and does not marginalize over \emph{switching ambiguity} (alternative plausible regime paths).

All curvature diagnostics we report are data curvature only (no prior Hessian added). Since regularization is fixed across objectives, comparisons isolate objective-induced curvature.

\subsection{Marginal Likelihood Curvature and Observed Information via PAL-RNN}
To obtain a marginal likelihood-based reference geometry, we probabilistically augment the AL-RNN into a hybrid switching SSM (PAL-RNN). Instead of treating the switching codes $\bm{c}_t$ as deterministic functions of $\bm{z}_t$, we model them as latent random variables governed by probabilistic gating. Starting from a switching linear-Gaussian SSM:
\begin{align}
\label{eq:ssmtransition}
\bm{z}_{t+1}\mid \bm{z}_t,\bm{c}_t
&\sim
\mathcal{N}\!\Big(\big(\bm{A}+\bm{W}\bm{D}(\bm{c}_t)\big)\bm{z}_t+\bm{h},\ \bm{Q}\Big),\\
\bm{x}_t\mid \bm{z}_t
&\sim
\mathcal{N}\!\big(\bm{B}\bm{z}_t,\ \bm{R}\big),
\nonumber
\end{align}
we introduce probit gate noise:
\begin{equation}
\label{eq:probit}
c_{t,j}\mid \bm{z}_t
\sim
\mathrm{Bernoulli}\!\left(\Phi\!\left(\frac{z_{t,M-P+j}}{\sigma_g}\right)\right),
\end{equation}
where $\Phi$ is the standard normal CDF and $\sigma_g>0$ is a hyperparameter that controls gate stochasticity. $\bm Q$ and $\bm R$ are process/observation-noise covariances. In our experiments we use isotropic noise, i.e., $\bm Q=\sigma_{\mathrm{proc}}^2\bm I$ and $\bm R=\sigma_{\mathrm{obs}}^2\bm I$ on the relevant state/observation dimensions. We further interpret the squared error in \eqref{eq:tf_loss} as a Gaussian negative log-pseudo-likelihood with covariance $\bm R$, so $\bm R = \bm \Lambda$, the loss weight in the ITF curvature \eqref{eq:tfgn}.\\

Under the PAL-RNN formulation, the local curvature of the marginal likelihood $p_{\bm{\theta}}(\bm{x}_{1:T})$ of an observed time series is given by the \emph{observed information}, which, by Louis' identity \citep{louis1982}, decomposes into a complete-data term minus a missing-information correction arising from latent uncertainty:
\begin{align}
\Io(\bm\theta)
&=
\E\!\left[\Ic(\bm\theta)\mid \bm{x}\right]
\nonumber\\
&-
\underbrace{\Cov\!\left(
\nabla_{\bm\theta}\log p(\bm{x},\bm{z},\bm{c}\mid\bm\theta)
\,\middle|\,\bm{x}
\right)}_{\Imiss(\bm\theta)\succeq \bm{0}},
\label{eq:louis}
\end{align}
where $\Ic(\bm\theta):=-\nabla_{\bm\theta}^2 \log p_{\bm\theta}(\bm x,\bm z,\bm c)$ is the complete-data information. Missing information is large when multiple switching code explanations remain plausible \emph{a posteriori}. Under a Laplace approximation at $\hat{\bm\theta}$, $\Io(\hat{\bm\theta})$ provides the marginal likelihood contribution to posterior precision. Thus, objective curvature dictates local Gaussian uncertainty under both likelihood and loss-based frameworks. Next, we compare the ITF-induced pseudo-likelihood geometry with the marginal likelihood-based observed-information geometry and examine their implications for local identifiability and long-horizon DSR.
\vspace{-0.5em}
\section{Results}
\vspace{-0.5em}
\paragraph{Missing Information vs.\ Posterior Entropy}
To isolate the Louis mechanism in a setting where switching ambiguity can be varied without changing the underlying linear regimes, we study a probit-gated switching AR(1) toy model (Appendix~\ref{sec:s:louis_toy} and Appendix~\ref{sec:s:rbpf_louis_details}),
and sweep the gate-noise scale $\sigma_g$.
For each run, we relate the (time-averaged) mean posterior gate entropy to the missing-information ratio
($\mathrm{MIR}:=1-\mathrm{tr}(\Io)/\mathrm{tr}(\mathbb{E}[\Ic\mid \bm x_{1:T}])$), and the observed curvature proxy $\log_{10}\mathrm{tr}(\Io)$. Figure~\ref{fig:results}(a) shows that increasing entropy increases MIR and decreases observed curvature. This validates that posterior switching ambiguity flattens the marginal likelihood geometry through the Louis missing-information term. 

\begin{figure*}[t]
\centering
\includegraphics[width=\textwidth]{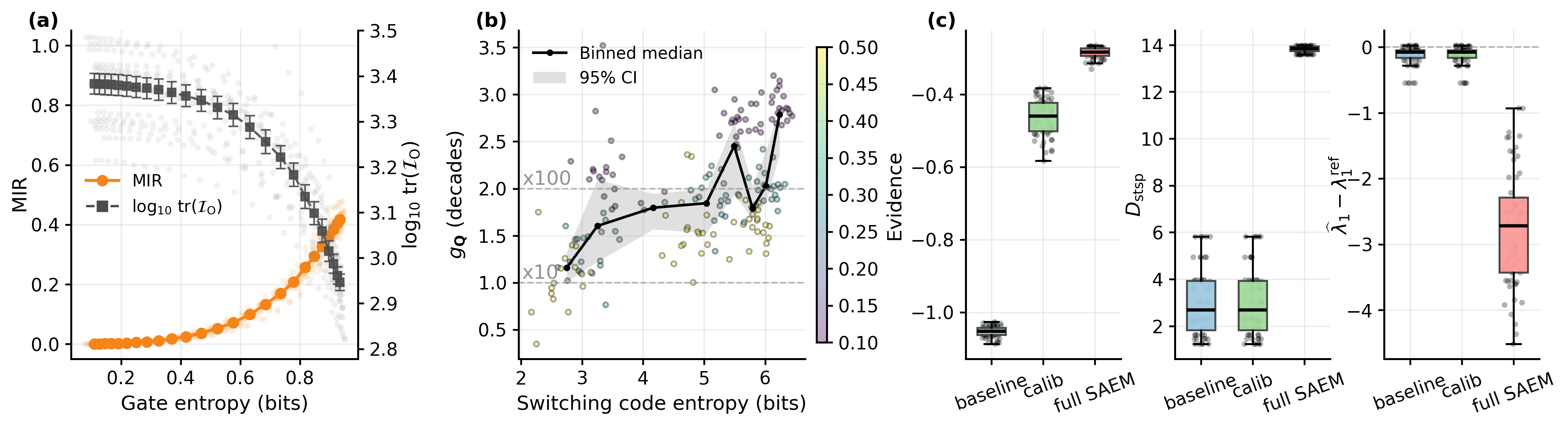}
\vspace{-1.2ex}
\caption{Summary of results.
\textbf{(a)} In a probit-gated switching AR(1) model, increasing gate noise
$\sigma_g$ leads to higher posterior switching ambiguity. This increases both mean posterior gate entropy and missing-information ratio (MIR), while decreasing observed-curvature proxy $\log_{10}\,\mathrm{tr}(\Io)$. Faint points $=$ individual runs; markers $=$ mean $\pm$ SEM across 20 seeds.
\textbf{(b)} Curvature gap $g_{\bm Q}$ (measuring how much ITF curvature exceeds ambiguity-aware observed information) vs.\ filtering switching code entropy $H_c$ evaluated at the same drift parameters. The gap indicates that conditioning on a single forced switching path inflates curvature under switching ambiguity. Points are colored by observation-noise level $\sigma_{\mathrm{obs}}$.
\textbf{(c)} Particle-SAEM fine-tuning: held-out windowed conditional log evidence (mean over windows; normalized per step and per observed dimension) and two hard-gated QoIs: $D_{\mathrm{stsp}}$ and signed LE error $\widehat{\lambda}_1-\lambda_1^{\mathrm{ref}}$ (negative $=$ overly contracting dynamics). \texttt{calib} updates process/observation noise only (QoIs unchanged); \texttt{full SAEM} updates drift+noise, improves evidence the most, but degrades both QoIs.}
\label{fig:results}
\end{figure*}

\paragraph{ITF-aligned Curvature Exceeds Ambiguity-Aware Observed Information.}
To examine whether ITF yields sharper local curvature than ambiguity-aware observed information under switching ambiguity, we compare ITF-aligned curvature to ambiguity-aware observed information. We test this on AL-RNNs trained under ITF on Lorenz--63 data \citep{lorenz1963deterministic}. For each model with drift parameters $\hat{\bm\theta}=\{\bm A,\bm W,\bm h\}$, we construct a probabilistic counterpart by applying the PAL-RNN formulation (\ref{eq:ssmtransition},~\ref{eq:probit}) and adding isotropic process/observation and probit gate noise (Sec.~\ref{sec:twoposteriors}). We compute the ITF-aligned curvature $\bm{\mathcal I}_{\mathrm{ITF}}(\hat{\bm\theta})$ along the forced rollout \eqref{eq:itf_rollout}. Using a Rao--Blackwellized particle smoother, we estimate $\Io(\hat{\bm\theta})$ via Louis' identity \eqref{eq:louis} while sweeping over noise levels (see Appendix~\ref{sec:s:rbpf_louis_details} for details). We quantify the mismatch by the local curvature gap
\begin{equation}
    \label{eq:gap}
    g_{\bm Q}=\log_{10}\!\left(T\,\mathrm{tr}(\bm{\mathcal I}_{\mathrm{ITF}})\,/\,\mathrm{tr}(\Io)\right),
\end{equation}
which estimates curvature inflation under ITF for a fixed process-noise scale $\bm Q$. Relating $g_{\bm Q}$ to (time-averaged) filtering switching code entropy $H_c$, Figure~\ref{fig:results}(b) shows gaps ranging from  $\approx 10$ up to $\approx 10^3$ that tend to increase with $H_c$. Appendix~\ref{sec:s:matrix_mismatch} shows that this curvature gap is not a uniform matrix dominance: most generalized-eigenvalue directions are not sharper under ITF, while trace summaries and leading-subspace overlaps still indicate an anisotropic, subspace-dependent geometry mismatch.

\paragraph{Evidence–QoI Misalignment}
To assess whether higher evidence implies better dynamics, we fine-tune PAL-RNNs initialized from ITF-trained AL-RNN checkpoints by maximizing windowed conditional log evidence with particle stochastic approximation expectation--maximization (particle-SAEM; E-step via a Rao--Blackwellized particle smoother; Appendix~\ref{sec:s:saem_details}). We compare (i) \texttt{baseline} (no fine-tuning), (ii) \texttt{calib} (process/observation noise only), and (iii) \texttt{full SAEM} (noise + drift). Across $12$ ITF-pretrained initializations and five window lengths
(detailed settings in Appendix~\ref{sec:s:datasets}), we evaluate held-out evidence and QoIs: overlap in state distribution $D_{\mathrm{stsp}}$ and largest LE error $\widehat{\lambda}_1-\lambda_1^{\mathrm{ref}}$ (QoI details in Appendix~\ref{sec:s:repro_and_metrics}). Hard-gated QoIs use deterministic rollouts with thresholded gates and zero noise.

Figure~\ref{fig:results}(c) reveals an evidence–QoI misalignment across models and window lengths. Although both \texttt{calib} and \texttt{full SAEM} increase held-out evidence (with \texttt{calib} leaving the hard-gated QoIs unchanged by construction), updating drift parameters (\texttt{full SAEM}) substantially degrades invariant reconstruction. In particular, it drives the signed largest Lyapunov error $\widehat{\lambda}_1-\lambda_1^{\mathrm{ref}}$
from near zero to strongly negative, indicating that windowed evidence trades chaos for stability while distorting long-run invariant structure. 
\vspace{-0.5em}
\section{Implications for Post-Bayesian DSR}
\vspace{-0.5em}
Local geometry matters in chaotic DSR: it drives local optimization, uncertainty quantification, identifiability, and experimental design. In switching surrogates, it is strongly objective-dependent: teacher-forced curvature conditions on a single forced switching path, whereas ambiguity-aware observed information flattens curvature when multiple regime explanations remain plausible. Windowed marginal likelihood fine-tuning can also be misaligned with long-horizon goals: evidence may improve while dynamical invariants degrade when drift parameters are updated. Together, these findings suggest there is no canonical posterior for chaotic DSR. The update and geometry should be chosen for the target QoI, motivating QoI-aware post-Bayesian learning and geometry-aware active data collection.

\subsubsection*{Acknowledgments}
This work was funded by the Baden-Württemberg Ministry of Science as part of the Excellence Strategy of the German Federal and State Governments, by the Federal Ministry of Research, Technology and Space (BMFTR) under the neuroAI initiative (01GQ2509B), by the German Research Foundation (Du 354/15-1), and the Hector II foundation.

\bibliographystyle{apalike}
\bibliography{refs_optimal}

\clearpage
\onecolumn

\renewcommand{\thesection}{A\arabic{section}}
\renewcommand{\thesubsection}{A\arabic{section}.\arabic{subsection}}
\renewcommand{\theequation}{A\arabic{equation}}
\renewcommand{\thealgorithm}{A\arabic{algorithm}}
\renewcommand{\theproposition}{A\arabic{proposition}}
\renewcommand{\theremark}{A\arabic{remark}}

\renewcommand{\theHsection}{A\arabic{section}}
\renewcommand{\theHsubsection}{A\arabic{section}.\arabic{subsection}}
\renewcommand{\theHequation}{A\arabic{equation}}
\providecommand{\theHalgorithm}{\arabic{algorithm}}
\renewcommand{\theHalgorithm}{A\arabic{algorithm}}
\providecommand{\theHproposition}{\arabic{proposition}}
\renewcommand{\theHproposition}{A\arabic{proposition}}
\providecommand{\theHremark}{\arabic{remark}}
\renewcommand{\theHremark}{A\arabic{remark}}

\setcounter{equation}{0}
\setcounter{algorithm}{0}
\setcounter{proposition}{0}
\setcounter{remark}{0}

\runningtitle{Appendix: Teacher Forcing as Generalized Bayes}

\aistatstitle{Appendix for\\
Teacher Forcing as Generalized Bayes: Optimization Geometry Mismatch in Switching Surrogates for Chaotic Dynamics}

\subsubsection*{Contents}
\noindent Appendix~\ref{sec:a:primer}: Dynamical systems reconstruction (DSR) \dotfill\ \pageref{sec:a:primer}\\
Appendix~\ref{sec:s:method_details}: Methodological details \dotfill\ \pageref{sec:s:method_details}\\
Appendix~\ref{sec:s:datasets}: Datasets and experimental settings \dotfill\ \pageref{sec:s:datasets}\\
Appendix~\ref{sec:s:repro_and_metrics}: Metric definitions and computation details \dotfill\ \pageref{sec:s:repro_and_metrics}\\
Appendix~\ref{sec:s:optional}: Additional analyses \dotfill\ \pageref{sec:s:optional}\\
Appendix~\ref{sec:s:limitations}: Limitations and future work \dotfill\ \pageref{sec:s:limitations}

\setcounter{section}{-1}
\section{Dynamical systems reconstruction (DSR)}
\label{sec:a:primer}
This section provides additional details on the field of DSR, its dynamical quantities of interest (QoIs), and the challenges of training surrogates for chaotic systems. It also introduces the AL-RNN surrogate model used in this work and motivates why objective geometry matters in DSR.
\subsection{DSR overview}
The goal of DSR is to learn a 
generative surrogate model of an unknown time-evolving process from observed time series. Unlike conventional forecasting, the scientific target is typically not only to remain aligned with one particular trajectory, but to recover the underlying dynamical mechanisms well enough to reproduce long-run behavior and support downstream analysis via interpretable model architectures.
\subsubsection{A minimal formalization}
Throughout, we focus on discrete time, $t \in \mathbb{Z}$. A deterministic DS is then expressed by an iterated map
\begin{equation}
\label{eq:a0:ds_map}
\bm{z}_{t+1}=F^{\star}(\bm{z}_t),\qquad \bm{z}_t\in \mathcal{Z} \subseteq\mathbb{R}^M,
\end{equation}
where $\mathcal{Z}$ is the (compact, measurable) \emph{state space} of the system and observations are generated through an (often non-invertible) measurement operator $g$,
\begin{equation}
\label{eq:a0:obs}
\bm{x}_t = g(\bm{z}_t) + \bm{\eta}_t,
\end{equation}
with observation noise $\bm{\eta}_t$. DSR trains a generative model to approximate a surrogate map $F_{\bm\theta}$ (and possibly a surrogate observation operator) so that the induced dynamics match the true system in an appropriate DS sense.
An idealized target is topological conjugacy: there exists a homeomorphism $h$ such that $h(F^{\star}(\bm{z}))=F_{\bm\theta}(h(\bm{z}))$.
In practice this is often too strong globally. Instead, one typically aims for agreement restricted to the \emph{attractor} (and its basin of attraction) that is supported by the observed data, and assesses it through dynamical QoIs \citep{durstewitz2023reconstructing,goering2024oodg}. An attractor is a forward-invariant set that attracts a neighborhood of initial conditions in state space. In applications, we only observe trajectories drawn from (or quickly converging to) this attracting set, so reconstruction is evaluated on the portion of state space that carries mass under the corresponding invariant measure of this set. 
\paragraph{Conventions used throughout this work.}
We follow the state-space notation $\bm{z}_t\in\mathbb{R}^M$ (latent state) and denote observations by $\bm{x}_t\in\mathbb{R}^N$. We choose the measurement operator $g$ to be a linear projection
\begin{equation}
\label{eq:a0:obs_linear_B}
\bm{x}_t = \bm{B}\bm{z}_t + \bm{\eta}_t,\qquad \bm{\eta}_t\sim\mathcal N(\bm 0,\bm R).
\end{equation}
For identity teacher forcing (ITF), we use $\bm{B}=[\bm{I}_N\ \bm{0}] \in \mathbb{R}^{N\times M}$ to select the first $N$ coordinates of the state as observations. At forcing times $\mathcal T_\tau=\{t:t\equiv 0\ (\mathrm{mod}\ \tau),\,t>0\}$, ITF applies the forcing intervention
\begin{equation}
\label{eq:a0:itf_overwrite}
\tilde{\bm z}_t=\bm z_t+\bm B^{\mathsf T}(\bm x_t-\bm B\bm z_t),
\end{equation}
which overwrites the observed coordinates with the data while leaving the unobserved coordinates unchanged.
\subsection{Measuring success: dynamical QoIs}
The success of DSR is typically judged by whether the free-running surrogate reproduces the system's \emph{typical long-run} behavior on the attractor, rather than by trajectory-level alignment. Under standard ergodicity/mixing assumptions, long-run statistics (time averages of observables) can be expressed as expectations under the system's invariant measure. Concretely, an \emph{invariant measure} $\mu$ is a stationary distribution over states that satisfies $\mu(F^{\star -1}(Z))=\mu(Z)$ for all measurable sets $Z \subseteq \mathcal{Z}$. Let $\mu$ denote the invariant measure of the true system on the attractor of interest, and $\mu_{\bm\theta}$ the invariant measure induced by the surrogate. Matching long-run properties can then be expressed as agreement of expectations of relevant observables $\psi$,
\begin{equation}
\label{eq:a0:inv_expect}
\E_{\bm{z}\sim\mu}[\psi(\bm{z})]\ \approx\ \E_{\bm{z}\sim\mu_{\bm\theta}}[\psi(\bm{z})].
\end{equation}
In practice, agreement in Eq.~\eqref{eq:a0:inv_expect} is assessed by comparing long observed sequences to long free-running surrogate rollouts. Examples of $\psi$ include (i) overlap in long-run state occupancy (a proxy for agreement between invariant measures), (ii) temporal summary statistics (e.g., power spectrum), and (iii) Lyapunov exponents (LEs). Typically, one discards an initial transient (``burn-in'') from rollouts to reduce dependence on initialization.\footnote{Early in a rollout, the state distribution can still depend strongly on the initial condition (before the trajectory has mixed onto the attracting set). Discarding an initial ``burn-in'' reduces this initialization bias so that empirical averages better approximate invariant-measure expectations.}

\subsection{Challenges of DSR for chaotic systems}
Chaotic systems exhibit sensitive dependence on initial conditions: first-order perturbations typically grow like
\begin{equation}
\label{eq:a0:lyap_growth}
\delta\bm z_{T+1} \approx \big(\prod_{t=1}^{T} \bm J_t\big)\,\delta\bm z_1, \quad \text{where } \bm J_t := \partial F^{\star}(\bm z_t)/\partial \bm z_t.
\end{equation}
The largest LE of the system (``maximal LE'') is then given by the exponential growth rate of these Jacobian products,
\begin{equation}
	\lambda_{\max} = \limsup_{T\to\infty}\frac{1}{T}\log\|\prod_{t=1}^{T}\bm J_t\|.
\end{equation}
Chaotic systems have a positive maximal LE, $\lambda_{\max}>0$, so small initial errors grow exponentially, making long-term trajectory alignment impossible in practice. This implies a finite effective forecast horizon \emph{even for a correct model}. This has two important consequences. First, trajectory-alignment losses (e.g. MSE over long horizons) become uninformative for DSR on chaotic systems. Second, the same Jacobian products that drive chaotic expansion also appear in backpropagation through time (BPTT) and real time recurrent learning, creating exploding gradients that make naive long-horizon training unstable \citep{mikhaeil2022chaosrnn}. A common rough forecast-horizon scale is the Lyapunov time $t_\lambda:=1/\lambda_{\max}$, i.e. a characteristic time over which small errors amplify by a factor $e$.
\subsection{Teacher forcing as intervention-based training}
To alleviate the ill-conditioning of BPTT in chaotic DSR, modern DSR training methods introduce controlled interventions to stabilize gradients for recurrent neural networks (RNNs) while still training the model to generate plausible long rollouts. Sparse teacher forcing (STF) and generalized teacher forcing (GTF) are two prominent examples \citep{mikhaeil2022chaosrnn,hess2023gtf}.
In GTF, one interpolates between model state and a data-inferred state
\begin{equation}
\label{eq:a0:gtf_mix}
\hat{\bm{z}}_t = (1-\alpha_t)\bm{z}_t + \alpha_t\,\tilde{\bm{z}}_t,\qquad \tilde{\bm{z}}_t\approx \mathcal{E}(\bm{x}_t),
\end{equation}
and advances the dynamics using this interpolated state (e.g., $\bm z_{t+1}=F_{\bm\theta}(\hat{\bm z}_t)$), with $\alpha_t\in[0,1]$ controlling forcing strength (typically, $\alpha_t\equiv\alpha$ and forcing is applied at every step $t$). Here $\mathcal{E}$ denotes an encoder mapping observations to a surrogate latent state.
STF (including ITF) can be viewed as a sparse special case where the intervention is applied only at forcing times, together with a partial overwrite of observed coordinates.
Both schemes can be understood as optimizing a loss \emph{under an intervention policy} that modifies the autonomous dynamics during training. Crucially, the geometry (curvature) of the resulting training objective need not match the geometry of any likelihood for the free-running model.

\subsection{AL-RNN surrogates for DSR}
\subsubsection{Piecewise-linear RNNs for DSR}
DSR models are typically not only prediction tools but also objects of downstream analysis. Piecewise-linear RNNs have been shown to be an attractive choice because they can represent complex nonlinear dynamics while retaining locally linear structure. This structure leads to closed-form Jacobians and dynamical objects (e.g., fixed points, cycles), thus enabling mechanistic interpretation of the learned dynamics \citep{eisenmann2023bifurcationsrnn,eisenmann2026invariantmanifolds}. The interpretability of these models has been exploited to analyze the dynamics of trained surrogates and to relate them to known mechanisms in the target system, e.g.\ in neuroscience applications
\citep{durstewitz2023reconstructing,emonds2025control}.

\subsubsection{AL-RNNs}
Almost-linear RNNs (AL-RNNs) particularly emphasize interpretability by concentrating the nonlinearity into a small block of $P < M$ ReLU-gated coordinates, yielding a parsimonious representation of nonlinear dynamics with $2^P$ linear subregions and an explicit symbolic regime code \citep{brenner2024alrnn,brenner2026uncovering}.
Formally, the AL-RNN models the transition map by 
\begin{equation}
\label{eq:a:alrnn}
\bm{z}_{t+1}
= F_{\bm\theta}(\bm{z}_t)
:= \bm{A} \bm{z}_t + \bm{W}\,\bm\phi^\ast(\bm{z}_t) + \bm{h},
\end{equation}
using a linear part $\bm{A} \in \mathbb{R}^{M\times M}$ (diagonal), a gated part $\bm{W} \in \mathbb{R}^{M\times M}$ (full), and an affine bias $\bm{h} \in \mathbb{R}^M$. The activation $\bm\phi^\ast$ applies ReLU only on the last $P$ coordinates and is identity elsewhere. While external inputs can also be included into the state transition, we focus on autonomous dynamics throughout this work. Let $\bm{c}_t\in\{0,1\}^P$ be the 
\emph{switching code} that records which of the last $P$ coordinates are active (positive) at time $t$. This allows us to rewrite the transition as a switching affine system,
\begin{equation}
\label{eq:alrnnswitching}
\bm z_{t+1} = \big(\bm A+\bm W\bm D(\bm c_t)\big)\bm z_t + \bm h,
\qquad
\bm{D}(\bm{c}_t)
=\mathrm{diag}\big( \underbrace{1,\dots,1}_{M-P},\,c_{t,1},\dots,c_{t,P} \big).
\end{equation}
Hence conditional on $\bm c_t$, the dynamics are linear. The switching code sequence
$\{\bm c_t\}_{t=1:T}$ can be analyzed as a symbolic description of how the model partitions state space (e.g., regime occupancy, switching rate, regime-specific linearization and stability).

\paragraph{Learned linear initialization embedding.}
When initializing an AL-RNN rollout from the first observation $\bm x_1\in\mathbb{R}^N$ (rather than from a full latent initial state $\bm z_1\in\mathbb{R}^M$), we embed $\bm x_1$ into the full state space using a learned linear map $\bm E\in\mathbb{R}^{M\times N}$,
\begin{equation}
\label{eq:alrnn_init_embed}
\bm z_1 \leftarrow \bm E\bm x_1,\qquad \bm z_1 \leftarrow \bm z_1 + \bm B^{\mathsf T}(\bm x_1-\bm B\bm z_1),
\end{equation}
so that the observed coordinates match the data ($\bm B\bm z_1=\bm x_1$) while the unobserved coordinates are provided by the initializer.
The initializer $\bm E$ is trained jointly with the AL-RNN parameters under the same training objective (i.e., it is a linear map that produces an initial latent state from the first observation).

\subsection{Why objective geometry matters in DSR}
For the goals of DSR,
there are multiple reasonable learning criteria (teacher-forced prediction losses, likelihood-based objectives with latent states, and hybrid criteria), and they can disagree strongly in which parameter directions they treat as sensitive. Curvature provides a local summary of this geometry, quantifying ``which directions matter''. In particular, under chaos, small changes in parameters lead to large changes in trajectories, so the curvature of the objective can have a strong influence on which local minima are found and how well they match the true system's invariant structure. In switching latent-variable models, Louis' identity implies that latent ambiguity reduces marginal-likelihood curvature through a missing-information correction term. This is the mechanism behind our ``geometry mismatch'' narrative: training-time forcing can induce sharp curvature (overconfident geometry) when the marginal likelihood, after properly accounting for switching uncertainty, is comparatively flat.

\paragraph{QoI-aware objectives.}
Related approaches in data-driven surrogate modeling emphasize matching long-horizon statistics (e.g., invariant measures and distributional summaries) via losses defined on long rollouts or on learned operators acting on observables \citep{schiff2024dyslim,cheng2025learningcl}. These objectives are complementary to the local-geometry perspective used here: we focus on how training-time forcing and latent switching uncertainty shape the curvature of the fitted criterion, and how this geometry can disagree with ambiguity-aware likelihood curvature even when the resulting rollouts appear qualitatively plausible.

\section{Methodological details}
\label{sec:s:method_details}

\subsection{Louis missing-information mechanism for two-state gates}
\label{sec:s:louis_toy}
This section states the two-state Louis mechanism used for intuition in the main paper, based on Louis' identity \citep{louis1982,dempster1977em}. It also specifies the controlled-ambiguity toy model used to visualize the mechanism.

For completeness, we restate Louis' identity in standard latent-variable notation.
Let $\bm z$ denote latent variables and define the complete-data log density $\ell_{\mathrm{comp}}(\bm\theta):=\log p_{\bm\theta}(\bm x,\bm z)$.
Then the observed information for the marginal likelihood $p_{\bm\theta}(\bm x)$ satisfies
\begin{equation}
\label{eq:s:louis_identity}
\Io(\bm\theta)
= -\nabla_{\bm\theta}^2\log p_{\bm\theta}(\bm x)
= \E\big[-\nabla_{\bm\theta}^2\ell_{\mathrm{comp}}(\bm\theta)\mid \bm x\big]
- \Cov\big(\nabla_{\bm\theta}\ell_{\mathrm{comp}}(\bm\theta)\mid \bm x\big),
\end{equation}
where the conditional moments are taken under the posterior $p_{\bm\theta}(\bm z\mid \bm x)$.

\begin{proposition}[Louis missing-information term under state-dependent gating]
\label{prop:s:louis_missing_state_dependent_gate}
Let $C$ be a latent gate variable and let $\bm{x}$ denote the observed quantity.
For brevity, we write
\begin{equation}
p_{\bm\theta}(\bm x,c):=p_{\bm\theta}(\bm x, C{=}c),\qquad
p_{\bm\theta}(\bm x\mid c):=p_{\bm\theta}(\bm x\mid C{=}c),\qquad
p_{\bm\theta}(c\mid \bm x):=\mathbb{P}_{\bm\theta}(C{=}c\mid \bm x).
\end{equation}
Assume the joint model can be written as
\begin{equation}
p_{\bm\theta}(\bm x,c) \,=\, \pi(c\mid \bm x)\,p_{\bm\theta}(\bm x\mid c),\qquad c\in\{0,1\},
\end{equation}
where $\pi(c\mid \bm x)$ is a (possibly $\bm x$-dependent) gating term that is \emph{independent of $\bm\theta$} for the parameter block of interest.
Define, for each $c\in\{0,1\}$,
\begin{equation}
\bm s_c(\bm\theta) := \nabla_{\bm\theta}\log p_{\bm\theta}(\bm x\mid c),
\qquad
\bm{\mathcal I}_c(\bm\theta) := -\nabla_{\bm\theta}^2\log p_{\bm\theta}(\bm x\mid c).
\end{equation}
Let
\begin{equation}
p := p_{\bm\theta}(1\mid \bm x)
= \frac{\pi(1\mid \bm x)\,p_{\bm\theta}(\bm x\mid 1)}{\pi(0\mid \bm x)\,p_{\bm\theta}(\bm x\mid 0)+\pi(1\mid \bm x)\,p_{\bm\theta}(\bm x\mid 1)}.
\end{equation}
Then the observed information for the marginal likelihood $p_{\bm\theta}(\bm x)$ satisfies
\begin{equation}
\label{eq:s:2switch_state_dependent}
\Io(\bm\theta)
=
p\,\bm{\mathcal{I}}_1(\bm\theta) + (1-p)\,\bm{\mathcal{I}}_0(\bm\theta)
- \Imiss(\bm\theta).
\end{equation}
\begin{equation}
\label{eq:s:2switch_state_dependent_miss}
\Imiss(\bm\theta)
:=
p(1-p)\big(\bm{s}_1(\bm\theta)-\bm{s}_0(\bm\theta)\big)\big(\bm{s}_1(\bm\theta)-\bm{s}_0(\bm\theta)\big)^{\mathsf T}
\succeq 0.
\end{equation}
In particular, the missing-information term $\Imiss(\bm\theta)$ is large when (i) the posterior is ambiguous
($p(1-p)$ is maximized at $p=1/2$) and (ii) the two explanations disagree in score
(large $\|\bm{s}_1(\bm\theta)-\bm{s}_0(\bm\theta)\|$).
\end{proposition}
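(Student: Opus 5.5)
The plan is to compute $-\nabla^2_{\bm\theta}\log p_{\bm\theta}(\bm x)$ directly from the two-component mixture representation, rather than invoking the general identity \eqref{eq:s:louis_identity}; this also confirms that \eqref{eq:s:2switch_state_dependent} is its specialization. Write $w_c:=\pi(c\mid\bm x)\,p_{\bm\theta}(\bm x\mid c)$, so that $p_{\bm\theta}(\bm x)=w_0+w_1$. Since $\pi(c\mid\bm x)$ does not depend on $\bm\theta$, we have $\nabla_{\bm\theta} w_c = w_c\,\bm s_c$. The gradient of the log marginal is then
\[
\nabla_{\bm\theta}\log p_{\bm\theta}(\bm x) = \frac{w_1\bm s_1+w_0\bm s_0}{w_0+w_1} = p\,\bm s_1+(1-p)\,\bm s_0 =: \bar{\bm s},
\]
using that $p=w_1/(w_0+w_1)$, which matches the stated posterior.

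Next, I differentiate once more. Applying the quotient rule to $\sum_c w_c\bm s_c / \sum_c w_c$ gives
\[
\nabla^2_{\bm\theta}\log p_{\bm\theta}(\bm x) = \sum_c p_c\big(\nabla\bm s_c + \bm s_c\bm s_c^{\mathsf T}\big) - \bar{\bm s}\,\bar{\bm s}^{\mathsf T},
\]
where $p_1=p$ and $p_0=1-p$. The product rule gives $\nabla(w_c\bm s_c)=w_c(\nabla\bm s_c+\bm s_c\bm s_c^{\mathsf T})$. Substituting $\nabla\bm s_c=-\bm{\mathcal I}_c$ and negating yields
\[
\Io = p\,\bm{\mathcal I}_1+(1-p)\,\bm{\mathcal I}_0 - \Big(\sum_c p_c\,\bm s_c\bm s_c^{\mathsf T}-\bar{\bm s}\bar{\bm s}^{\mathsf T}\Big).
\]
The term in parentheses is the posterior covariance of the score. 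Because $\pi$ does not depend on $\bm\theta$, the complete-data score $\nabla_{\bm\theta}\log p_{\bm\theta}(\bm x,c)$ equals $\bm s_c$, so this is exactly the $\Imiss$ term of \eqref{eq:s:louis_identity}.

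The remaining step is to identify the covariance of a two-point distribution. The score takes the value $\bm s_1$ with probability $p$ and $\bm s_0$ with probability $1-p$. Expanding $p\,\bm s_1\bm s_1^{\mathsf T}+(1-p)\,\bm s_0\bm s_0^{\mathsf T}-(p\bm s_1+(1-p)\bm s_0)(p\bm s_1+(1-p)\bm s_0)^{\mathsf T}$ and collecting coefficients gives the following:
\begin{itemize}
\item the coefficient of $\bm s_1\bm s_1^{\mathsf T}$ is $p-p^2$;
\item the coefficient of $\bm s_0\bm s_0^{\mathsf T}$ is $(1-p)-(1-p)^2=p(1-p)$;
\item the cross terms each carry $-p(1-p)$.
\end{itemize}
Together these give $p(1-p)(\bm s_1-\bm s_0)(\bm s_1-\bm s_0)^{\mathsf T}$. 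This is a nonnegative multiple of a rank-one outer product, hence positive semidefinite. The qualitative remarks follow immediately: $p(1-p)$ is maximized at $p=1/2$, and the term scales with $\|\bm s_1-\bm s_0\|^2$.

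No step is a real obstacle; the only care needed is bookkeeping. First, the $\bm\theta$-independence of $\pi$ is what makes $\nabla w_c=w_c\bm s_c$ valid; if $\pi$ depended on $\bm\theta$, extra gating-score terms would appear. Second, one must assume enough smoothness of $p_{\bm\theta}(\bm x\mid c)$ for the second derivatives to exist, and positivity of $w_0+w_1$.
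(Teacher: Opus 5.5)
Your proof is correct and follows essentially the same route as the paper. The paper applies Louis' identity (Eq.~\eqref{eq:s:louis_identity}) to the two-point gate posterior and simplifies the score covariance to $p(1-p)(\bm s_1-\bm s_0)(\bm s_1-\bm s_0)^{\mathsf T}$. You instead first re-derive that identity for the two-component sum by differentiating $\log(w_0+w_1)$ directly, which makes the argument self-contained, and then carry out the same two-point covariance algebra with correct coefficient bookkeeping and correct use of the $\bm\theta$-independence of $\pi$.
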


\begin{remark}[When gating depends on $\bm\theta$]
If the gating distribution depends on $\bm\theta$, then the complete-data score for $(\bm x, C)$ includes an additional term
$\nabla_{\bm\theta}\log \pi_{\bm\theta}(C\mid \bm{x})$, so both terms in Louis’ identity acquire extra contributions.
In that case, the simple rank-one form in Eq.~\eqref{eq:s:2switch_state_dependent} need not hold exactly, but the qualitative mechanism remains:
greater posterior ambiguity and larger between-component score gaps reduce observed information.
\end{remark}

\subsubsection{Proof}
\begin{proof}[Proof of Proposition~\ref{prop:s:louis_missing_state_dependent_gate}]
Because $\pi(c\mid \bm x)$ is $\bm\theta$-independent, we have
\begin{equation}
\log p_{\bm\theta}(\bm x,c)=\log \pi(c\mid \bm x)+\log p_{\bm\theta}(\bm x\mid c).
\end{equation}
Therefore the complete-data score and complete-data negative Hessian for the parameter block $\bm\theta$ are
\begin{equation}
\nabla_{\bm\theta}\log p_{\bm\theta}(\bm x,c)=\bm s_c(\bm\theta),
\qquad
-\nabla_{\bm\theta}^2\log p_{\bm\theta}(\bm x,c)=\bm{\mathcal I}_c(\bm\theta).
\end{equation}
Louis' identity for this two-point latent gate can be written as
\begin{align}
\Io(\bm\theta)
&=\sum_{c\in\{0,1\}}p_{\bm\theta}(c\mid \bm x)\,\bm{\mathcal I}_c(\bm\theta)\notag\\
&\quad-\Bigg(
\sum_{c\in\{0,1\}}p_{\bm\theta}(c\mid \bm x)\,\bm s_c(\bm\theta)\bm s_c(\bm\theta)^{\mathsf T}
-\Big(\sum_{c\in\{0,1\}}p_{\bm\theta}(c\mid \bm x)\,\bm s_c(\bm\theta)\Big)\Big(\sum_{c\in\{0,1\}}p_{\bm\theta}(c\mid \bm x)\,\bm s_c(\bm\theta)\Big)^{\mathsf T}
\Bigg).
\end{align}
Writing $p:=p_{\bm\theta}(1\mid \bm x)$ and simplifying yields
\begin{equation}
\Io(\bm\theta)=p\,\bm{\mathcal I}_1(\bm\theta)+(1-p)\,\bm{\mathcal I}_0(\bm\theta)
-p(1-p)\big(\bm s_1(\bm\theta)-\bm s_0(\bm\theta)\big)\big(\bm s_1(\bm\theta)-\bm s_0(\bm\theta)\big)^{\mathsf T},
\end{equation}
which is Eq.~\eqref{eq:s:2switch_state_dependent} with the missing-information term shown in Eq.~\eqref{eq:s:2switch_state_dependent_miss}.
\end{proof}

\subsubsection{Toy model settings (controlled switching ambiguity)}
The mechanism illustrated in the main text (Fig.~\ref{fig:results}a) was computed using a two-regime, scalar switching AR(1) toy model with a probit gate. Let $x_t\in\mathbb{R}$ denote the (observed) scalar state and let $c_t\in\{0,1\}$ denote the latent regime indicator at time $t$.
We generate trajectories from
\begin{equation}
c_t\mid x_t \sim \mathrm{Bernoulli}\!\left(\Phi\!\left(x_t/\sigma_g\right)\right),\qquad
x_{t+1}\mid x_t,c_t \sim \mathcal{N}\!\left(a_{c_t}x_t,\,\sigma^2\right),
\end{equation}
where $\Phi$ is the standard normal CDF and $\sigma_g$ controls gate stochasticity (and therefore posterior switching ambiguity).
We fix $a_0=0.90$, $a_1=0.60$, $\sigma=0.15$, and simulate length-$T$ trajectories with $T=600$ for $20$ random seeds.
All time indices in this toy model use $t=1,\dots,T-1$.

To vary ambiguity, we sweep $\sigma_g$ over $25$ logarithmically spaced values between $0.03$ and $0.9$.
For each simulated pair $(x_t,x_{t+1})$, the gate posterior is available in closed form:
letting $\pi_t:=\Phi(x_t/\sigma_g)$, we have
\begin{equation}
p_t:=\mathbb{P}(c_t=1\mid x_t,x_{t+1})
=\frac{\pi_t\,\mathcal{N}(x_{t+1};a_1x_t,\sigma^2)}{(1-\pi_t)\,\mathcal{N}(x_{t+1};a_0x_t,\sigma^2)+\pi_t\,\mathcal{N}(x_{t+1};a_1x_t,\sigma^2)}.
\end{equation}
The switching-uncertainty summary reported in the main paper (Fig.~\ref{fig:results}a) is the time-average of the Bernoulli entropy (in bits),
\begin{equation}
H(p_t) = \frac{1}{T-1}\sum_{t=1}^{T-1}\big[-p_t\log_2 p_t-(1-p_t)\log_2(1-p_t)\big].
\end{equation}
For the same simulated trajectories we compute the toy-model observed information $\Io$ for the drift-parameter block $\bm\theta=(a_0,a_1)$ by applying the two-state Louis decomposition in Eq.~\eqref{eq:s:2switch_state_dependent} at each time step using the closed-form posteriors $p_t$, and then summing over $t$.
\subsection{Observed information via RBPF and Louis' identity}
\label{sec:s:rbpf_louis_details}
This section describes how we estimate observed information for the \emph{probabilistic AL-RNN (PAL-RNN)}, which augments the AL-RNN to a switching linear-Gaussian state-space model (SSM) with latent gate variables \citep{durbin2001ssm}. The key point is that, conditional on a gate path $\bm c_{1:T-1}$, the continuous state admits closed-form Kalman filtering/smoothing and closed-form complete-data score and information contributions. Thus, we approximate the required smoothing distribution with a Rao--Blackwellized particle filter/smoother (RBPF) over gate paths and then apply Louis' identity to estimate $\Io$. See e.g. \citet{doucet2000rbpf} for RBPF background.
\subsubsection{PAL-RNN}
The PAL-RNN is a probabilistic augmentation of the AL-RNN (Eq.~\eqref{eq:alrnnswitching}) that retains the same switching structure. However, it treats the switching code as a latent random variable via a probit gate model and adds Gaussian process/observation noise, so that conditional on a gate path it becomes a linear-Gaussian SSM.

Figure~\ref{fig:s:palrnn_graphical_model} shows the corresponding directed graphical model.

\begin{figure}[t]
\centering
\includegraphics[width=0.5\textwidth]{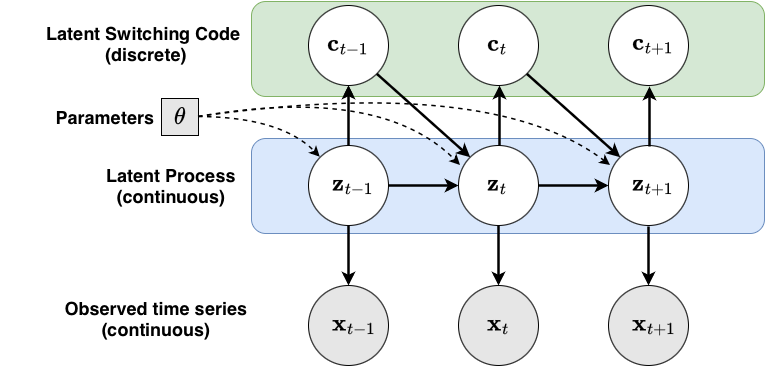}
\caption{Directed graphical model of the probabilistic AL-RNN (PAL-RNN), shown as a conditional Bayesian network over $(\bm z_{1:T},\bm c_{1:T},\bm x_{1:T})$ given shared drift parameters $\bm\theta=(\bm A,\bm W,\bm h)$. The continuous latent state $\bm z_t$ emits observations $\bm x_t$, and the discrete switching code $\bm c_t\in\{0,1\}^P$ is drawn from a state-dependent probit model $p(\bm c_t\mid \bm z_t)$ and indexes the transition $p(\bm z_{t+1}\mid \bm z_t,\bm c_t;\bm\theta)$. Dashed arrows indicate that the same parameters $\bm\theta$ are shared across time. Noise (hyper-)parameters (e.g., $\bm Q,\bm R,\sigma_g$) are treated as fixed and omitted for visual clarity.}
\label{fig:s:palrnn_graphical_model}
\end{figure}

We use the same notation as in Eq.~\eqref{eq:alrnnswitching}. In particular, $\bm c_t\in\{0,1\}^P$ encodes which of the last $P$ coordinates are gated (active) and $\bm D(\bm c_t)$ is the corresponding diagonal gate matrix.
For a length-$T$ segment, gates index transitions for $t=1,\dots,T-1$. The conditional dynamics and observation model are
\begin{align}
\bm{z}_{t+1} &= \big(\bm{A}+\bm{W}\bm{D}(\bm{c}_t)\big)\bm{z}_t + \bm{h} + \bm\varepsilon_t,\qquad \bm\varepsilon_t\sim\mathcal{N}(\bm{0},\,\bm{Q}),\notag
\\
\bm{x}_t &= \bm{B}\bm{z}_t + \bm\eta_t,\qquad \bm\eta_t\sim\mathcal{N}(\bm{0},\,\bm{R}).
\end{align}
We use the linear observation model in Eq.~\eqref{eq:a0:obs_linear_B} with the identity projection $\bm B=[\bm I_N\ \bm 0]$. In our experiments, we use isotropic noise $\bm{Q}=\sigma_{\mathrm{proc}}^2\bm{I}_M$ and $\bm{R}=\sigma_{\mathrm{obs}}^2\bm{I}_N$.

The switching law is state dependent but has no \emph{explicit} dependence on the drift-parameter block of interest $\bm\theta=(\bm A,\bm W,\bm h)$:
\begin{equation}
\mathbb{P}(c_{t,j}=1\mid \bm{z}_t) = \Phi\!\left(\frac{z_{t,M-P+j}}{\sigma_g}\right),\qquad j=1,\dots,P,
\end{equation}
where $\Phi$ is the standard normal CDF and $\sigma_g$ is a fixed gate-noise scale.
For the curvature diagnostics in this section, we hold $(\sigma_{\mathrm{proc}},\sigma_{\mathrm{obs}},\sigma_g)$ fixed as hyperparameters (Table~\ref{tab:s:rbpf_hparams}). In the particle-SAEM experiments later in the supplement (Section~\ref{sec:s:saem_details}), we update $(\sigma_{\mathrm{proc}},\sigma_{\mathrm{obs}})$ in the \texttt{calib} and \texttt{full SAEM} configurations, while keeping $\sigma_g$ fixed.
We also treat the learned linear initialization embedding $\bm E$ (Eq.~\eqref{eq:alrnn_init_embed}) as fixed when computing curvature.
Under these assumptions, the switching enters Louis’ identity only through posterior uncertainty over gate paths, matching the setting of Proposition~\ref{prop:s:louis_missing_state_dependent_gate}.
\paragraph{Relation to (r)SLDS.}
The PAL-RNN is closely related to switching linear dynamical systems (SLDS) and recurrent SLDS variants used for hybrid time-series modeling \citep{ghahramani2000switchingssm,murphy1998switchingkf,linderman2017rslds}. A key difference is that we do not introduce a separate Markov transition model over discrete regimes: instead, the latent gate bits $\bm c_t\in\{0,1\}^P$ are state dependent and (given $\bm z_t$) conditionally independent across time, reflecting the AL-RNN's per-unit gating structure. Moreover, SLDS/rSLDS typically use a categorical regime variable with regime-specific linear dynamics, whereas PAL-RNN uses a factorial code that deterministically masks the same shared drift parameters via $\bm D(\bm c_t)$.
\subsubsection{RBPF filtering, approximate smoothing, and Louis identity}
The RBPF samples gate paths and Rao--Blackwellizes the continuous state using a Kalman filter, yielding a (particle) mixture of Kalman filters \citep{chenliu2000mixturekf}. At each time $t$, particle $i$ carries a sampled gate prefix $\bm c_{1:t-1}^{(i)}$ and a Gaussian filtering belief
\begin{equation}
p(\bm{z}_t\mid \bm{x}_{1:t},\bm{c}_{1:t-1}^{(i)})=\mathcal{N}(\bm{m}_t^{(i)},\bm{P}_t^{(i)}).
\end{equation}
The RBPF represents the gate-prefix posterior $p(\bm c_{1:t-1}\mid \bm x_{1:t})$ with finitely many particles (Monte Carlo approximation). Conditional on a fixed gate prefix, the Kalman update is exact.
Gate bits are proposed with a factorized \emph{integrated probit} rule, i.e. we analytically average the probit link under the particle’s Gaussian belief. Concretely, if a gated coordinate has marginal $z\sim\mathcal{N}(\mu,\nu)$ under the particle’s current belief, then
\begin{equation}
\mathbb{P}(c{=}1\mid \mu,\nu)=\Phi\!\left(\frac{\mu}{\sqrt{\nu+\sigma_g^2}}\right).
\end{equation}
Given sampled $\bm{c}_t^{(i)}$, the conditional dynamics are linear-Gaussian and the particle performs a Kalman predict/update to obtain $(\bm m_{t+1}^{(i)},\bm P_{t+1}^{(i)})$, with importance weights updated by the one-step predictive likelihood.
Resampling is triggered when the effective sample size (ESS) falls below a fraction of $N_p$. With normalized weights $w_t^{(i)}\propto\exp(\log w_t^{(i)})$, the ESS is
\begin{equation}
	\label{eq:rbpf_ess}
	\mathrm{ESS}_t := \frac{1}{\sum_{i=1}^{N_p} (w_t^{(i)})^2},
\end{equation}
and when $\mathrm{ESS}_t < \tau_{\mathrm{ESS}} N_p$, where $\tau_{\mathrm{ESS}}\in(0,1)$ is the resampling threshold (here $\tau_{\mathrm{ESS}}{=}0.5$), we resample particle indices. This is done via i.i.d. sampling from the categorical distribution with probabilities $w_t$ using multinomial resampling with replacement. Afterwards, we reset weights to $1/N_p$, and continue. 

The RBPF used for the experiments employs a bootstrap-style proposal: gate bits are sampled from the integrated gate prior described above, and the weight update uses the Kalman log-likelihood increment $\log p(\bm{x}_{t+1}\mid \bm{x}_{1:t},\bm{c}_{1:t}^{(i)};\bm\theta)$.

To approximate Louis' identity on a length-$T$ segment, we generate $S$ approximate smoothed $(\bm c,\bm z)$ trajectories from the RBPF by tracing ancestors and sampling backward in time under the conditional Gaussian dynamics. For each sampled trajectory, we compute per-step complete-data score contributions $\bm{s}_t(\bm\theta)$ and complete-data information contributions $\bm{\mathcal{I}}_{\mathrm{comp},t}(\bm\theta)$ for the drift-parameter block $\bm\theta=(\bm A,\bm W,\bm h)$.
The observed information estimate then is given by
\begin{align}
\Iohat
&\;=\; \widehat{\E}\!\left[\sum_{t=1}^{T-1}\bm{\mathcal{I}}_{\mathrm{comp},t}\right] \; - \; \widehat{\Cov}\!\left(\sum_{t=1}^{T-1}\bm{s}_t\right)\notag\\
&\;=\; \widehat{\E}[\Ic] \; - \; \widehat{\Cov}\!\left(\sum_{t=1}^{T-1}\bm{s}_t\right).
\end{align}
Here $\widehat{\E}$ and $\widehat{\Cov}$ denote the empirical mean and covariance over the $S$ sampled smoothed trajectories. Hence,
this is Louis’ identity with Monte Carlo estimates of the complete-data expectation and the conditional score covariance under the RBPF smoothing distribution. Pseudo-code for the estimator is given in Algorithm~\ref{alg:rbpf-louis}.

\begin{algorithm}[t]
\caption{RBPF and Louis estimator of observed information}
\label{alg:rbpf-louis}
\begin{algorithmic}[1]
\STATE \textbf{Input:} observations $\bm{x}_{1:T}$; parameters $\bm\theta=(\bm{A},\bm{W},\bm{h})$; hyperparameters $(\sigma_{\mathrm{proc}},\sigma_{\mathrm{obs}},\sigma_g)$; number of particles $N_p$; number of smoothing trajectories $S$.
\STATE \textbf{Init:} for each particle $i$, set $w_1^{(i)}\leftarrow 1/N_p$ and initialize the filter $(\bm{m}_1^{(i)},\bm{P}_1^{(i)})$.
\FOR{$t=1,\dots,T-1$}
	\FOR{$i=1,\dots,N_p$}
		\STATE Sample gate bits $\bm{c}_t^{(i)}$ from the factorized integrated-probit proposal under $\mathcal{N}(\bm{m}_t^{(i)},\bm{P}_t^{(i)})$.
		\STATE Run Kalman predict/update to obtain $(\bm{m}_{t+1}^{(i)},\bm{P}_{t+1}^{(i)})$ given $\bm{c}_t^{(i)}$.
		\STATE Update weight: $w_{t+1}^{(i)}\propto w_t^{(i)}\,p(\bm{x}_{t+1}\mid \bm{x}_{1:t},\bm{c}_{1:t}^{(i)};\bm\theta)$.
	\ENDFOR
		\STATE Normalize weights.
		\STATE If ESS $< \tau_{\mathrm{ESS}} N_p$ (default $\tau_{\mathrm{ESS}}=0.5$), resample and reset weights; store ancestry and sampled codes.
\ENDFOR
\FOR{$s=1,\dots,S$}
	\STATE Sample a gate path $\bm{c}_{1:T-1}^{(s)}$ via ancestry tracing / backward sampling.
	\STATE Sample a continuous trajectory $\bm{z}_{1:T}^{(s)}$ from the conditional Gaussian smoother (given $\bm{x}_{1:T}$ and $\bm{c}_{1:T-1}^{(s)}$).
	\STATE Accumulate complete-data score $\sum_t \bm{s}_t^{(s)}$ and complete-data information $\sum_t \bm{\mathcal{I}}_{\mathrm{comp},t}^{(s)}$ for $\bm\theta=(\bm{A},\bm{W},\bm{h})$.
\ENDFOR
\STATE Return $\Iohat=\widehat{\E}[\Ic] - \widehat{\Cov}(\sum_t \bm{s}_t)$ and diagnostics (ESS, entropies).
\end{algorithmic}
\end{algorithm}

\subsubsection{Filtering switching code entropy and ESS}
We summarize switching uncertainty using the time-averaged RBPF filtering (online) switching code entropy $H_c$ (in bits), computed from the RBPF filtering distribution at each transition time $t=1,\dots,T-1$. We use filtering rather than smoothing entropy because it is the most directly aligned with ITF, which is a forward (online) intervention. Let $w_t^{(i)}$ denote the normalized particle weights after the weight update at transition $t$ (i.e., after incorporating $\bm x_{t+1}$ via the one-step predictive likelihood) and let $\bm{c}_t^{(i)}\in\{0,1\}^P$ be the sampled gate bits.
Equivalently, the induced full-code distribution $\hat p_t(c)$ defined below estimates the online gate uncertainty at transition $t$:
\begin{equation}
\hat p_t(c)\ \approx\ p_{\bm\theta}(\bm c_t=c\mid \bm x_{1:t+1}).
\end{equation}

\paragraph{Full-code entropy.}
Since the size of the full codebook is manageable in our experiments, we can form an empirical filtering posterior over full codes $c\in\mathcal C$, where $\mathcal C:=\{0,1\}^P$, via
\begin{align}
\hat p_t(c) &:= \sum_{i=1}^{N_p} w_t^{(i)}\,\mathbbm{1}\{\bm{c}_t^{(i)}=c\},\notag\\
H_{t,\mathrm{code}} &:= -\sum_{c\in\mathcal C} \hat p_t(c)\log \hat p_t(c).
\end{align}
When $2^P$ is too large to tabulate the full code distribution, one can instead use a bit-marginal entropy sum as a cheaper proxy.

\paragraph{Time-averaged entropy in bits.}
The entropy values reported in the curvature-gap results (Fig.~\ref{fig:results}b) are the time average of the full-code entropy, converted to bits:
\begin{equation}
H_c := \frac{1}{(T-1)\log 2}\sum_{t=1}^{T-1} H_{t,\mathrm{code}}.
\end{equation}

\paragraph{Missing-information ratio (MIR).}
To summarize the magnitude of the Louis missing-information correction relative to the complete-data curvature, we report the missing-information ratio
\begin{align}
\mathrm{MIR} &:= 1-\frac{\tr(\Iohat)}{\tr(\widehat{\E}[\Ic\mid \bm x_{1:T}])}\notag\\
&=\frac{\tr(\widehat{\E}[\Ic\mid \bm x_{1:T}]-\Iohat)}{\tr(\widehat{\E}[\Ic\mid \bm x_{1:T}])}.
\end{align}
where $\widehat{\E}[\Ic\mid \bm x_{1:T}]$ is the Monte Carlo estimate of the complete-data information term from the same smoothing samples used for $\Iohat$. Values closer to $1$ indicate larger curvature reductions due to latent uncertainty.
\subsubsection{Settings used for curvature-gap experiments}
In the curvature-gap experiment (Fig.~\ref{fig:results}b), the RBPF/Louis diagnostics are computed on fixed-length segments. The hyperparameter settings are summarized in Table~\ref{tab:s:rbpf_hparams}.
\begin{table}[!htbp]
\centering
\caption{Settings used for the curvature-gap analysis in Fig.~\ref{fig:results}b.}
\label{tab:s:rbpf_hparams}
{\small\setlength{\tabcolsep}{4pt}\renewcommand{\arraystretch}{1.1}
\begin{tabular}{l l}
\hline
Setting & Value \\
\hline
\multicolumn{2}{l}{\textbf{AL-RNN training settings}}\\
Dataset & Lorenz-63 \\
Training trajectory length $T_{\mathrm{train}}$& $8 \cdot 10^4$ \\
Noise regimes & $\sigma_{\mathrm{proc}}\in\{0.1,0.3,0.5\}$, $\sigma_{\mathrm{obs}}\in\{0.1, 0.3, 0.5\}$ \\
Forcing interval $\tau$ & $\{4,8,16,32,64\}$ \\
Latent dimension $M$ & 30 \\
Number of gated units $P$ & 10 \\
Batch size $B$ & 16 \\
BPTT sequence length $L_{\mathrm{BPTT}}$ & 200 \\
Epochs & 2000\\
Batches per epoch & 50 \\
\hline
\multicolumn{2}{l}{\textbf{RBPF/Louis settings}}\\
Sequence length for curvature diagnostics $T$ & 200 \\
Particles $N_p$ & 64 \\
Smoothing samples $S$ & 8 \\
Resampling threshold $\tau_{\mathrm{ESS}}$ & 0.5 \\
Gate noise $\sigma_g$ & 0.1\\
\hline
\end{tabular}}
\end{table}
\subsection{ITF-aligned curvature proxy}
\label{sec:s:tf_fim}
This section gives the closed-form computation of the ITF-aligned generalized Gauss--Newton/Fisher curvature proxy used throughout the experiments.

Under identity teacher forcing (ITF), forcing is applied at times $\mathcal T_\tau=\{t:t\equiv 0\ (\mathrm{mod}\ \tau),\,t>0\}$, as described in Appendix~\ref{sec:a:primer} (Eq.~\eqref{eq:a0:itf_overwrite}). We compute the per-step generalized Gauss--Newton/Fisher matrix aligned with this intervention-based training objective, denoted $\bm{\mathcal{I}}_{\mathrm{ITF}}$, by differentiating the forced rollout and accumulating the resulting Jacobians. We use $\tr(\bm{\mathcal{I}}_{\mathrm{ITF}})$ as a curvature proxy for ITF training, and as a comparator to ambiguity-aware observed information (Section~\ref{sec:s:rbpf_louis_details}).

\paragraph{Gaussian pseudo-likelihood aligned with ITF.}
Let $\{\bm{x}_t\}_{t=1}^{T}$ be an observed trajectory with $\bm{x}_t\in\mathbb{R}^N$ and let $\hat{\bm x}_t:=\bm B\bm z_t$ be the one-step prediction of the observed coordinates. The ITF loss used for training is the MSE along the forced rollout,
\begin{equation}
\label{eq:a:tf_loss}
\mathcal{L}_{\mathrm{ITF}}(\bm\theta;\bm{x}_{1:T})
= \frac{1}{T-1}\sum_{t=1}^{T-1}\big\|\bm{B}\bm{z}_{t+1}-\bm{x}_{t+1}\big\|_2^2.
\end{equation}
For curvature diagnostics, we interpret this as a Gaussian negative log pseudo-likelihood with an isotropic weighting $\bm\Lambda=\sigma_{\mathrm{obs}}^2\bm I_N$, so that the generalized Gauss--Newton matrix equals the Fisher matrix for this pseudo-likelihood.

\paragraph{ITF intervention and sensitivity reset.}
Let $\bm{z}_t\in\mathbb{R}^M$ be the latent state and let $\bm B=[\bm I_N\ \bm 0]$ be the projection from Eq.~\eqref{eq:a0:obs_linear_B}. At forcing times, ITF overwrites the observed coordinates,
\begin{align}
\tilde{\bm z}_t
&=\bm z_t+\bm B^{\mathsf T}(\bm x_t-\bm B\bm z_t)\notag\\
&=\bm M\bm z_t+\bm B^{\mathsf T}\bm x_t,\qquad
\bm M:=\bm I-\bm B^{\mathsf T}\bm B.
\end{align}
Since $\bm x_t$ does not depend on parameters, this implies the sensitivity boundary condition
\begin{equation}
\frac{\partial\tilde{\bm z}_t}{\partial\bm\theta}=\bm M\frac{\partial\bm z_t}{\partial\bm\theta}.
\end{equation}
Thus, at forcing times, the sensitivity $\partial \bm{z}_t/\partial \bm\theta$ is reset by multiplication with $\bm M$, which zeros out the first $N$ rows (i.e. the observed coordinates) and leaves the remaining $M-N$ rows unchanged.

\paragraph{Piecewise-linear derivatives along the forced rollout.}
The forced rollout evolves according to the AL-RNN transition map evaluated at the forced or unforced state,
\begin{equation}
\bm{z}_{t+1} = F_{\bm \theta}(\bar{\bm z}_t)
= \bm{A}\bar{\bm z}_t + \bm{W}\,\bm\phi^\ast(\bar{\bm z}_t) + \bm{h},
\qquad \bar{\bm z}_t:=\begin{cases}\tilde{\bm z}_t,& t\in\mathcal T_\tau,\\ \bm z_t,&\text{else.}\end{cases}
\end{equation}
Let $\bm c_t\in\{0,1\}^P$ denote the induced switching code at time $t$ along this rollout, i.e.\ \;$c_{t,j}=\mathbbm{1}\{\bar z_{t,M-P+j}>0\}$. With $\bm D(\bm c_t)$ as in Eq.~\eqref{eq:alrnnswitching}, the Jacobian of the one-step map with respect to the state is
\begin{equation}
\label{eq:a1:stepjacobian}
\bm{J}_t := \frac{\partial F_{\bm \theta}}{\partial \bm{z}}(\bar{\bm z}_t)
= \bm{A} + \bm{W}\bm D(\bm c_t),
\end{equation}
where we adopted the subgradient convention of setting derivatives at the ReLU-kink to zero.
\paragraph{Closed-form sensitivity recursion for $\partial \bm{z}_t/\partial \bm\theta$.}
We evaluate curvature for the drift-parameter block $\bm\theta=(\bm A,\bm W,\bm h)$ and treat the initial condition as fixed. (In our implementation, $\bm z_1$ is obtained from a learned linear embedding of the first observation, which we keep fixed for curvature diagnostics, see Eq.~\eqref{eq:alrnn_init_embed}).

For the sensitivity recursion, we parameterize $\bm A$ by its diagonal entries $\bm a:=\mathrm{diag}(\bm A)\in\mathbb R^M$ and vectorize
\begin{equation}
\bm\theta := \begin{bmatrix}\bm a\\ \mathrm{vec}(\bm{W})\\ \bm{h}\end{bmatrix}\in\mathbb{R}^{p},
\qquad p=M+M^2+M,
\end{equation}
using row-major vectorization for $\mathrm{vec}(\bm{W})$. Define the sensitivity matrix $\bm{S}_t := \partial \bm{z}_t/\partial \bm\theta\in\mathbb{R}^{M\times p}$ and initialize $\bm{S}_1=\bm{0}$. For each step $t=1,\dots,T-1$ we compute
\begin{equation}
\bm{S}_{t+1} = \bm{J}_t\,\bar{\bm S}_t + \bm{V}_t,
\qquad
\bar{\bm S}_t:=\begin{cases}\bm M\bm S_t,& t\in\mathcal T_\tau,\\ \bm S_t,&\text{else,}\end{cases}
\end{equation}
where $\bm{V}_t:=\partial F_{\bm \theta}/\partial \bm\theta$ has the following closed form at state $\bar{\bm z}_t$:
\begin{align}
\text{(a block)}\quad & \bm{V}_t^{(a)} = \mathrm{diag}(\bar{\bm z}_t)\in\mathbb{R}^{M\times M},
\notag\\
\text{(W block)}\quad & \bm{V}_t^{(W)} = \bm{I}_M\otimes \bm\phi^\ast(\bar{\bm z}_t)^{\mathsf T}\in\mathbb{R}^{M\times M^2},
\notag\\
\text{(h block)}\quad & \bm{V}_t^{(h)} = \bm{I}_M\in\mathbb{R}^{M\times M}.
\end{align}
Thus $\bm{V}_t=[\bm{V}_t^{(a)}\;\bm{V}_t^{(W)}\;\bm{V}_t^{(h)}]$.

\paragraph{Information accumulation and per-step normalization.}
The Jacobian of the one-step prediction $\hat{\bm{x}}_{t+1}=\bm B\bm z_{t+1}$ with respect to parameters is $\bm{B}\bm{S}_{t+1}\in\mathbb{R}^{N\times p}$. For the isotropic pseudo-likelihood weighting $\bm\Lambda=\sigma^2\bm I_N$, the ITF-aligned Fisher is
\begin{align}
\bm{\mathcal{I}}_{\mathrm{ITF}} 
&= \frac{1}{T-1}\sum_{t=1}^{T-1} \left(\bm{B}\bm{S}_{t+1}\right)^{\mathsf T}\bm\Lambda^{-1}\left(\bm{B}\bm{S}_{t+1}\right)\notag\\
&= \frac{1}{(T-1)\,\sigma^2}\sum_{t=1}^{T-1} \left(\bm{B}\bm{S}_{t+1}\right)^{\mathsf T}\left(\bm{B}\bm{S}_{t+1}\right),
\end{align}
which is positive semidefinite by construction (up to floating-point roundoff). For notational simplicity, we write $\bm{\mathcal{I}}_{\mathrm{ITF}}$ for this per-step-normalized ITF-aligned Fisher in the main text and the remainder of this supplement.

\section{Datasets and experimental settings}
\label{sec:s:datasets}
\subsection{Lorenz-63 dataset}
All AL-RNN checkpoints analyzed in this work (both for curvature-gap diagnostics and as initializations in the SAEM experiments) are trained on trajectories generated from the Lorenz-63 system \citep{lorenz1963deterministic} across a sweep of observation- and process-noise regimes.
The continuous-time Lorenz-63 dynamics are
\begin{align}
\frac{\mathrm d z_1}{\mathrm d t} &= \sigma\,(z_2-z_1),
&
\frac{\mathrm d z_2}{\mathrm d t} &= z_1(\rho-z_3)-z_2,
&
\frac{\mathrm d z_3}{\mathrm d t} &= z_1 z_2-\beta z_3,
\end{align}
with the standard chaotic parameter setting $\sigma=10$, $\rho=28$, and $\beta=8/3$.
We denote the corresponding discrete-time latent state by $\bm z_t\in\mathbb R^3$ (with components corresponding to $(z_1,z_2,z_3)$).
We discretize with step size $\Delta t=0.01$.
In the deterministic regime ($\sigma_{\mathrm{proc}}=0$), we integrate the ODE using the fourth order Runge-Kutta method (RK4) \citep{kutta1901rungekutta}.
In the stochastic regime ($\sigma_{\mathrm{proc}}>0$), we use an additive-noise SDE discretization implemented as an RK4 drift step plus Euler--Maruyama diffusion,
\begin{equation}
\bm z_{t+1}=\mathrm{RK4}_{\Delta t}(\bm z_t)
+\sigma_{\mathrm{proc}}\sqrt{\Delta t}\,\mathrm{diag}(\bm s_{\mathrm{ref}})\bm\varepsilon_t,
\qquad \bm\varepsilon_t\sim\mathcal N(\bm 0,\bm I_3),
\end{equation}
where $\bm z_t$ denotes the latent Lorenz state and $\bm s_{\mathrm{ref}}$ is the per-coordinate standard deviation of the deterministic reference trajectory. Thus $\sigma_{\mathrm{proc}}$ is reported in standardized-coordinate units, although the perturbation is applied before standardization. 

Observation noise is added in raw coordinates as
\begin{equation}
\bm x_t=\bm z_t+\sigma_{\mathrm{obs}}\,\mathrm{diag}(\bm s_{\mathrm{train}})\bm\eta_t,
\qquad \bm\eta_t\sim\mathcal N(\bm 0,\bm I_3),
\end{equation}
where $\bm s_{\mathrm{train}}$ is the per-coordinate standard deviation used by the training standardizer. After standardization, $\sigma_{\mathrm{obs}}$ is therefore also on a standardized-coordinate noise scale.

\subsection{AL-RNN training settings}
All ITF-trained model checkpoints analyzed in this work (both for curvature-gap diagnostics and as initializations in the SAEM experiments) are trained on the Lorenz-63 time series generated by the procedure above. 

For training and evaluation, we standardize observations per dimension to zero mean and unit variance using the mean and standard deviation computed on the training trajectory. We denote the interleaving forcing interval by $\tau$. The loss minimized is MSE on the next-step prediction of the observed coordinates along the interleaved rollout, as described in Appendix~\ref{sec:a:primer} (Eq.~\eqref{eq:a:tf_loss}).

\paragraph{Parameter initialization.}
For each random initialization seed, we initialize the drift parameters by setting $\bm h=\bm 0$, drawing entries of $\bm W$ i.i.d.\ from $\mathcal N(0,0.01)$, and taking the diagonal entries of $\bm A$ from the diagonal of a random symmetric positive definite matrix normalized to have largest-magnitude eigenvalue 1.
The learned initialization embedding matrix $\bm E\in\mathbb R^{M\times N}$ is initialized by drawing i.i.d. from $U(-1/\sqrt{N},\,1/\sqrt{N})$. 

\paragraph{BPTT training details.}
Training is performed by BPTT on mini-batches of contiguous subsequences sampled from the standardized training trajectory $\bm x_{1:T_{\mathrm{train}}}$. Concretely, each training update uses a batch of $B$ sequences (here $B=16$) of length $L_{\mathrm{BPTT}}$ (here $L_{\mathrm{BPTT}}=200$) drawn uniformly at random (with replacement) from the training trajectory, and we run a fixed number of batches per epoch (here 50).
We optimize the ITF loss with the RAdam optimizer \citep{liu2020radam} using default momentum parameters $\beta_1=0.9$, $\beta_2=0.999$ and $\epsilon=10^{-8}$, and an exponential learning rate decay schedule from $10^{-3}$ to $10^{-5}$ over the course of training. 

The training settings and the subset of regimes used in the curvature-gap figure are summarized in Table~\ref{tab:s:rbpf_hparams}.

\subsection{Particle-SAEM fine-tuning for windowed evidence}
\label{sec:s:saem_details}
This section documents the particle-SAEM procedure used for the analysis of windowed evidence and dynamical QoIs in Fig.~\ref{fig:results}c. The results in Fig.~\ref{fig:results}c were aggregated over 12 ITF-pretrained AL-RNN initializations, one per training noise, each evaluated at five window lengths. The settings used in the experiments are summarized in Table~\ref{tab:s:saem_hparams}.

\subsubsection{Dataset used in SAEM experiments}
For the SAEM fine-tuning experiments we simulate deterministic Lorenz-63 data ($\sigma_{\mathrm{proc}}=0$) for $T=5000$ steps, add i.i.d.\ observation noise with standard deviation $\sigma_{\mathrm{obs}}=0.085$, and stack $n_{\mathrm{seq}}=4$ independent sequences.
We remove potential transients by discarding the first 1000 time steps and standardize each observed coordinate to zero mean and unit variance using statistics pooled over all sequences and time indices.

\subsubsection{Windowed evidence objective}
Given a sequence $\bm x_{1:T}$ (i.e. one of the $n_{\mathrm{seq}}$ Lorenz-63 sequences), we partition it into non-overlapping windows of length $L\in\mathbb N$ with stride $L$.
Concretely, for window starts
\begin{equation}
t_k := 1+(k-1)L,\qquad k=1,\dots,n_{\mathrm{win}},\qquad \text{with }\; t_k+L\le T,
\end{equation}
we define the $k$th window as the observation block $\bm x_{t_k+1:t_k+L}$ together with its left boundary condition $\bm x_{t_k}$.

For a PAL-RNN parameter setting $\bm\theta$ (drift parameters plus any noise parameters being optimized in the chosen configuration), the windowed evidence objective is the conditional log-evidence
\begin{equation}
\log p_{\bm\theta}\!\big(\bm x_{t+1:t+L}\mid \bm x_t\big),
\end{equation}
where the conditioning on $\bm x_t$ fixes the window boundary condition through the learned initialization described in Eq.~\eqref{eq:alrnn_init_embed}. In practice we approximate this quantity with the RBPF estimate of the particle filter normalizing constant,
\begin{equation}
\log p_{\bm\theta}\!\big(\bm x_{t+1:t+L}\mid \bm x_t\big)
\approx \log \widehat Z_{\bm\theta}(t,L),
\end{equation}
where $\widehat Z_{\bm\theta}(t,L)$ is obtained from the same RBPF forward pass used for smoothing (Section~\ref{sec:s:rbpf_louis_details}). Let $w_{t+s-1}^{(i)}$ denote the normalized particle weights before incorporating $\bm x_{t+s}$, and let
\begin{equation}
\ell_{t+s}^{(i)}
:= p_{\bm\theta}\!\big(\bm x_{t+s}\mid \bm x_t,\bm x_{t+1:t+s-1},\bm c_{t:t+s-1}^{(i)}\big),
\end{equation}
denote the one-step predictive likelihood under particle $i$ (available in closed form because, conditional on $\bm c_{t:t+s-1}^{(i)}$, the model is linear-Gaussian). These are the same increments used in the RBPF weight update. The corresponding particle-filter normalizing-constant estimator over the window is
\begin{align}
\widehat Z_{\bm\theta}(t,L)
&:= \prod_{s=1}^{L} \widehat p_{\bm\theta}\!\big(\bm x_{t+s}\mid \bm x_t,\bm x_{t+1:t+s-1}\big),\notag\\
\widehat p_{\bm\theta}\!\big(\bm x_{t+s}\mid \bm x_t,\bm x_{t+1:t+s-1}\big)
&:= \sum_{i=1}^{N_p} w_{t+s-1}^{(i)}\,\ell_{t+s}^{(i)}.
\end{align}
This sequential Monte Carlo estimator targets the desired conditional marginal likelihood and becomes accurate as particle budgets increase \citep{doucet2000smc}. We therefore use $\log\widehat Z_{\bm\theta}(t,L)$ as an  approximation to the windowed conditional log-evidence.

In Fig.~\ref{fig:results}c we report $\log\widehat{Z}_{\bm\theta}(t,L)$ on held-out windows, averaged over windows and normalized per step and per observed dimension,
\begin{equation}
\label{eq:a2:evidence}
	\mathrm{Evidence}(\bm\theta)
	:= \frac{1}{n_{\mathrm{win}}\,L\,N}\sum_{k=1}^{n_{\mathrm{win}}} \log \widehat Z_{\bm\theta}(t_k,L),
\end{equation}
computed on a fixed set of $n_{\mathrm{win}}$ held-out windows (Table~\ref{tab:s:saem_hparams}), where $N$ is the observation dimension.

\subsubsection{SAEM updates and configurations}
For each initialization and each window length $L$, we consider three parameter-update configurations.
We write the full parameter vector as
\begin{align}
\bm\theta := (\bm\theta_{\mathrm{drift}},\,\sigma_{\mathrm{proc}},\,\sigma_{\mathrm{obs}}),\notag\\
\bm\theta_{\mathrm{drift}}:=(\bm A,\bm W,\bm h),
\end{align}
and keep the gate-noise scale fixed at $\sigma_g=0.7$ throughout.
The three configurations differ only in which blocks of $\bm\theta$ are updated:
\begin{itemize}
	\item \texttt{baseline:} no updates, the objective is evaluated at initialization. 
	\item \texttt{calib:} update only $(\sigma_{\mathrm{proc}},\sigma_{\mathrm{obs}})$ and keep $\bm\theta_{\mathrm{drift}}$ fixed.
	\item \texttt{full SAEM:} update both $\bm\theta_{\mathrm{drift}}$ and $(\sigma_{\mathrm{proc}},\sigma_{\mathrm{obs}})$.
\end{itemize}
All configurations share the same PAL-RNN switching structure with $P$ gated coordinates (Section~\ref{sec:s:rbpf_louis_details}); only the updated parameter blocks differ. The underlying AL-RNN settings and the additional SAEM/RBPF hyperparameters are listed in Table~\ref{tab:s:saem_hparams}.

Note that the training noise scales $\sigma_{\mathrm{proc}}^{\mathrm{train}}$ and $\sigma_{\mathrm{obs}}^{\mathrm{train}}$ refer to the data-generation regime used when fitting the initialization checkpoint, and are distinct from the SAEM/RBPF noise parameters $(\sigma_{\mathrm{proc}},\sigma_{\mathrm{obs}})$ optimized during fine-tuning. For numerical stability, whenever $(\sigma_{\mathrm{proc}},\sigma_{\mathrm{obs}})$ are updated we impose a small positive floor $\sigma_{\min}=10^{-4}$ in the M-step (equivalently, estimated residual variances are clamped below by $\sigma_{\min}^2$).

For each initialization we run SAEM for all window sizes
$L\in\{16,32,64,128,200\}$, yielding 60 initialization-window settings per configuration.

We optimize the windowed evidence objective via stochastic approximation expectation maximization (SAEM), using a Monte Carlo E-step based on samples from the RBPF smoothing distribution \citep{delyon1999saem}. Operationally, we implement SAEM as damped EM updates on random mini-batches of windows.

At each iteration we (i) approximate the E-step expectations under the window-wise smoothing distribution via RBPF backward sampling, (ii) compute the corresponding complete-data sufficient statistics (restricted to the parameter blocks updated in the chosen configuration), (iii) compute a ridge-regularized M-step solution to obtain a provisional update, and (iv) blend this provisional update into the running parameters.

We use a fixed blend factor $\alpha_M$ (``blend step'' in Table~\ref{tab:s:saem_hparams}) and update rule
\begin{equation}
\bm\theta^{(r+1)}\;:=\;(1-\alpha_M)\,\bm\theta^{(r)}+\alpha_M\,\tilde{\bm\theta}^{(r)},
\end{equation}
where $\tilde{\bm\theta}^{(r)}$ denotes the ridge-regularized M-step solution for the chosen parameter blocks based on the Monte Carlo E-step at iteration $r$.
Pseudo-code is given in Algorithm~\ref{alg:s:saem}.

\begin{algorithm}[t]
\caption{Particle-SAEM update on windowed evidence}
\label{alg:s:saem}
\begin{algorithmic}[1]
\STATE \textbf{Input:} sequences $\{\bm x^{(j)}_{1:T}\}_{j=1}^{n_{\mathrm{seq}}}$; window length $L$, initial parameters $\bm\theta^{(1)}$, configuration (\texttt{baseline} / \texttt{calib} / \texttt{full SAEM}), fixed gate noise $\sigma_g$.
\STATE \textbf{Hyperparameters:} SAEM iterations $R$, windows per iteration $B$, RBPF particles $N_p$, smoothing samples $S$, resampling threshold $\tau_{\mathrm{ESS}}$, ridge regularization weight $\lambda_R$,
blend factor $\alpha_M$.
\FOR{$r=1,\dots,R$}
\STATE Sample a mini-batch of $B$ windows (sequence index $j$ and start time $t_k$) from the training windows.
\FOR{each sampled window $(j,t_k)$}
\STATE Run RBPF filtering on $\bm x^{(j)}_{t_k:t_k+L}$ (conditioning on $\bm x^{(j)}_{t_k}$) and generate $S$ approximate smoothed trajectories by backward sampling.
\STATE From the smoothing samples, form Monte Carlo estimates of the complete-data sufficient statistics needed for the M-step (restricted to the parameter blocks updated by the chosen configuration).
\ENDFOR
\STATE Aggregate sufficient statistics over the mini-batch (empirical mean over windows and smoothing samples).
\STATE Compute a ridge-regularized M-step solution $\tilde{\bm\theta}^{(r)}$ using ridge parameter $\lambda_R$.
\STATE Apply the damped update $\bm\theta^{(r+1)}\leftarrow (1-\alpha_M)\bm\theta^{(r)}+\alpha_M\tilde{\bm\theta}^{(r)}$, respecting the chosen configuration.
\ENDFOR
\STATE \textbf{Return:} final parameters $\bm\theta^{(R+1)}$.
\end{algorithmic}
\end{algorithm}

\begin{table}[t]
\centering
\caption{Settings used in the SAEM fine-tuning experiments in Fig.~\ref{fig:results}c.}
\label{tab:s:saem_hparams}
{\small\setlength{\tabcolsep}{4pt}\renewcommand{\arraystretch}{1.1}
\begin{tabular}{l l}
\hline
Quantity & Value \\
\hline
\multicolumn{2}{l}{\textbf{AL-RNN initialization (ITF pretraining)}}\\
Number of models & 12 AL-RNN initializations\\
Dataset & Lorenz-63 \\
Pretraining objective & ITF next-step MSE on standardized observations (Eq.~\eqref{eq:a:tf_loss}) \\
Training trajectory length $T_{\mathrm{train}}$ & $8\cdot 10^4$ \\
Test trajectory length $T_{\mathrm{test}}$ & $2\cdot 10^4$ \\
Noise regimes & $\sigma_{\mathrm{proc}}^{\mathrm{train}}\in\{0.1,0.3,0.5\}$, $\sigma_{\mathrm{obs}}^{\mathrm{train}}\in\{0,0.1,0.3,0.5\}$ \\
Forcing interval $\tau$ & $16$ \\
Latent dimension $M$ & 30 \\
Number of gated units $P$ & $10$ \\
Batch size $B$ & 16 \\
BPTT sequence length $L_{\mathrm{BPTT}}$ & 200 \\
Epochs & 2000 \\
Batches per epoch & 50 \\
\hline
\multicolumn{2}{l}{\textbf{RBPF/Louis settings}}\\
Sequence length for curvature diagnostics $T$ & 200 \\
Particles $N_p$ & 256 \\
Smoothing samples $S$ & 8 \\
Resampling threshold $\tau_{\mathrm{ESS}}$ & 0.5 \\
Gate noise $\sigma_g$ & 0.7 \\
\hline
\multicolumn{2}{l}{\textbf{SAEM settings}}\\
SAEM iterations $R$ & 8 \\
Windows per iteration & 80 \\
Ridge $\lambda_R$ & $10^{-2}$ \\
Blend step $\alpha_M$ & 0.25 \\
Held-out windows $n_{\mathrm{win}}$ & 120 \\
Window sizes $L$ & $\{16,32,64,128,200\}$ \\
\hline
\end{tabular}}
\end{table}

\section{Metric definitions and computation details}
\label{sec:s:repro_and_metrics}
This section defines the scalar metrics reported in the main-text figures and fixes their normalizations. Unless otherwise stated, all quantities are computed on standardized observation coordinates.

\subsection{Toy-model diagnostics (Fig.~\ref{fig:results}a)}
The toy model in Fig.~\ref{fig:results}a reports two curvature/ambiguity summaries derived from Louis' identity.
Let $\Iohat$ denote the observed information estimate for the toy-model parameter block of interest, and let $\widehat{\E}[\Ic\mid \bm x]$ denote the corresponding complete-data information term (both computed for the same length-$T$ trajectory). In this toy model the gate posterior $p_t$ is available in closed form (Section~\ref{sec:s:louis_toy}), so these quantities are computed by directly evaluating Eq.~\eqref{eq:s:2switch_state_dependent} (with $\bm\theta=(a_0,a_1)$) and aggregating over time, rather than via RBPF.

\paragraph{Missing-information ratio (MIR).}
We report the missing-information ratio
\begin{align}
\mathrm{MIR}
&:= 1-\frac{\tr(\Iohat)}{\tr\big(\widehat{\E}[\Ic\mid \bm x]\big)}\notag\\
&=\frac{\tr\big(\widehat{\E}[\Ic\mid \bm x]-\Iohat\big)}{\tr\big(\widehat{\E}[\Ic\mid \bm x]\big)},
\end{align}
which takes values in $[0,1]$ and increases with posterior latent uncertainty (Section~\ref{sec:s:rbpf_louis_details}, ``Missing-information ratio (MIR)'').

\paragraph{Observed-curvature proxy.}
As a scalar curvature proxy we also report $\log_{10}\tr(\Iohat)$. In the toy-model experiment, this quantity is used only for relative comparison across gate-noise settings $\sigma_g$ at fixed trajectory length $T$.

\subsection{Curvature gap / sharpness mismatch (Fig.\ref{fig:results}b)}
For a trained AL-RNN checkpoint, with (i) the ITF-aligned curvature proxy $\bm{\mathcal{I}}_{\mathrm{ITF}}$ (Section~\ref{sec:s:tf_fim}) and (ii) an RBPF/Louis observed information estimate $\Iohat$ computed on a trajectory of length $T$ (Section~\ref{sec:s:rbpf_louis_details}), we define the per-step observed information trace as
\begin{equation}
\tr(\Iostephat):=\frac{\tr(\Iohat)}{T}.
\end{equation}
The curvature-gap statistic reported in Fig.~\ref{fig:results}b is
\begin{equation}
\label{eq:s:gQ_def}
g_{\bm Q} \;:=\; \log_{10}\left(\frac{\tr(\bm{\mathcal{I}}_{\mathrm{ITF}})}{\tr(\Iostephat)}\right).
\end{equation}
Both numerator and denominator are interpreted on a per-step scale (in our implementation $\bm{\mathcal{I}}_{\mathrm{ITF}}$ is stored with the per-step normalization from Eq.~\eqref{eq:a:tf_loss}).

\subsection{Windowed evidence (held-out; RBPF estimate) (Fig.~\ref{fig:results}c)}
For a fixed window length $L$, the windowed conditional log-evidence is approximated by the RBPF normalizing-constant estimate $\log\widehat Z_{\bm\theta}(t,L)$ (Section~\ref{sec:s:saem_details}).
The quantity shown on the y-axis in Fig.~\ref{fig:results}c is the held-out windowed evidence score
$\mathrm{Evidence}(\bm\theta)$ (Eq.~\eqref{eq:a2:evidence}),
defined as the mean of $\log\widehat Z$ over a fixed set of held-out windows, normalized per step and per observed dimension by $L\,N$ (Section~\ref{sec:s:saem_details}).
We emphasize that this is a \emph{windowed} and \emph{conditional} evidence estimate, not the marginal likelihood of an entire long free-running trajectory.

\subsection{Dynamical QoI metrics (Fig.~\ref{fig:results}c)}
For the QoI evaluations in Fig.~\ref{fig:results}c we use long free-running rollouts of length $T=10^4$ time steps (discarding a fixed burn-in of 1000 steps), and compute both $D_{\mathrm{stsp}}$ and Lyapunov-based quantities on the resulting generated and reference trajectories.
\paragraph{State-space divergence $D_{\mathrm{stsp}}$.}
Given generated and reference trajectories $\{\bm{x}_t^{\mathrm{gen}}\}_{t=1}^T$ and $\{\bm{x}_t^{\mathrm{true}}\}_{t=1}^T$, we construct a joint histogram over a fixed bounding box defined by the minimum/maximum of the reference trajectory, using $n_{\mathrm{bins}}$ bins per dimension ($n_{\mathrm{bins}}=30$ throughout this work).
Let $\hat p$ and $\hat q$ be the resulting discrete distributions after Laplace smoothing with $\alpha_{\mathrm{smooth}}=10^{-5}$.
Then
\begin{equation}
D_{\mathrm{stsp}} := \mathrm{KL}(\hat p\,\|\,\hat q) = \sum_{b} \hat p(b)\log\frac{\hat p(b)}{\hat q(b)}.
\end{equation}
In the SAEM rollouts, the model is simulated in the standardized coordinates used by the objective, and we then apply the inverse standardization map before computing $D_{\mathrm{stsp}}$ so that the histogram comparison is carried out in the physical (raw) Lorenz coordinates.

\paragraph{Lyapunov exponent and error.}
We evaluate the largest LE $\lambda_{\mathrm{max}}$ of the learned discrete-time dynamics using a standard method based on QR re-orthonormalization \citep{benettin1980lyapunov}, applied to a sequence of closed-form one-step model Jacobians along a rollout. Rollouts are initialized from the first observation $\bm{x}_1\in\mathbb{R}^N$ using the learned linear embedding $\bm E$ (Eq.~\eqref{eq:alrnn_init_embed}).

Along the rollout, let $\bm c_t\in\{0,1\}^P$ denote the induced switching code at time $t$ (Eq.~\eqref{eq:alrnnswitching}) and let $\bm{D}(\bm{c}_t)$ be the corresponding diagonal gate matrix. The local Jacobian $\bm{J}_t$ of the one-step map is given by Eq.~\eqref{eq:a1:stepjacobian}.

In the SAEM rollout evaluation we use hard-gated deterministic rollouts for the results shown in Fig.~\ref{fig:results}c, i.e. we evolve the state with the deterministic AL-RNN transition and set the gate bits by the ReLU sign pattern at each time step.
For additional analyses (Section~\ref{sec:s:optional}), we also consider a stochastic probit-gated rollout that samples Bernoulli gates with probabilities $\Phi(z_{t,j}/\sigma_g)$ on the gated coordinates.

Given Jacobians $\{\bm{J}_t\}_{t=1}^T$ of the model, the discrete-time Lyapunov spectrum is estimated by QR re-orthonormalization \citep{benettin1980lyapunov}:
\begin{equation}
\widehat{\lambda}_k^{\mathrm{disc}} = \frac{1}{T}\sum_{t=1}^T \log\left|R_{t,kk}\right|,\quad \text{where } \bm{J}_t\bm{Q}_{t-1}=\bm{Q}_t\bm{R}_t.
\end{equation}
Continuous-time exponents are then obtained through dividing by $\Delta t$.
We report $\widehat{\lambda}_1 := \widehat{\lambda}_{\mathrm{max}}$ as the largest LE of the model after discarding an initial 1000 time steps of the rollout.

For the Lorenz-63 reference value we use the literature value $\lambda_{\mathrm{max}}^{\mathrm{ref}}=0.9056$ \citep{sparrow1982lorenz}. For $\sigma_{\mathrm{proc}} > 0$ we interpret this as a reference for the underlying noiseless drift dynamics; a noise-matched comparison would require a random-DS/SDE notion of LEs and a separate stochastic reference estimate, which we do not compute here. We report the signed Lyapunov error $\widehat{\lambda}_1-\lambda_{\mathrm{max}}^{\mathrm{ref}}$.

\subsection{Switching code diagnostics}
In the RBPF/Louis recomputation we report $H_c$, the time-averaged RBPF filtering switching code entropy (in bits) defined in Section~\ref{sec:s:rbpf_louis_details}.

\section{Additional analyses}
\label{sec:s:optional}
This section collects additional analyses not included in the main text. We (i) probe robustness of the entropy--gap association under stratifications, (ii) sanity-check RBPF computations with time-resolved diagnostics, (iii) provide qualitative visualizations that help interpret the quantitative QoI metrics and (iv) explore additional curvature diagnostics that complement the trace-based curvature-gap statistic in Fig.~\ref{fig:results}b. We also briefly discuss the computational cost of the RBPF/Louis computations and the SAEM fine-tuning procedure.

\subsection[Fixed-Q stratification of entropy--gap association]{Fixed-$Q$ stratification of entropy--gap association}
\label{sec:s:fixedQ_strat}

To check whether the entropy--gap association reported in the main paper (Fig.~\ref{fig:results}b; $g_{\bm Q}$ versus $H_c$ at fixed $\bm Q$) depends strongly on the fixed process-noise level $\sigma_{\mathrm{proc}}$ (i.e., fixed-$Q$ strata), we re-analyzed the fixed-$Q$ stochastic checkpoints ($\sigma_{\mathrm{obs}}>0$) separately for each $\sigma_{\mathrm{proc}}\in\{0.1,0.3,0.5\}$.
Each stratum contained $n=60$ checkpoints. Across strata, the rank association between RBPF filtering switching code entropy $H_c$ and curvature gap $g_{\bm Q}$ remained positive. However, after rank-residualizing both variables against $\sigma_{\mathrm{obs}}$, the partial correlations are near zero or negative. Thus, the positive within-stratum rank association is largely explained by variation in observation-noise level inside each process-noise stratum, rather than by a strong residual model-to-model entropy effect at fixed noise settings. Table~\ref{tab:s:fixedQ_strat} summarizes the per-stratum results.

Note that $H_c$ is computed from the RBPF \emph{filtering} distribution (Section~\ref{sec:s:rbpf_louis_details}), and therefore reflects an online ambiguity diagnostic (ITF-aligned). Smoothing-based entropies will typically be smaller.

\begin{table}[t]
\centering
\caption{Fixed-$Q$ stratification of the entropy--gap association (stochastic checkpoints with $\sigma_{\mathrm{obs}}>0$). Within each fixed-$\sigma_{\mathrm{proc}}$ stratum ($n=60$ checkpoints), we report the Spearman rank correlation between RBPF filtering code entropy $H_c$ and curvature gap $g_{\bm Q}$ (Fig.~\ref{fig:results}b). ``Partial $r$'' controls for $\sigma_{\mathrm{obs}}$ via rank-residualization. $95\%$ CIs are bootstrap percentiles (2000 resamples).}
\label{tab:s:fixedQ_strat}
{\small\setlength{\tabcolsep}{3pt}\renewcommand{\arraystretch}{1.1}
\begin{tabular}{c c c c c c c c c}
\hline
$\sigma_{\mathrm{proc}}$ & $n$ & $H_{\min}$ & $H_{\max}$ & $H$ span & Spearman $r$ & Spearman $p$ & Partial $r$ & Spearman $95\%$ CI \\
\hline
0.1 & 60 & 2.173 & 4.352 & 2.180 & 0.630 & $2.0\times 10^{-7}$ & 0.016 & $[0.426,\ 0.773]$ \\
0.3 & 60 & 4.314 & 5.929 & 1.615 & 0.628 & $7.8\times 10^{-8}$ & -0.039 & $[0.438,\ 0.751]$ \\
0.5 & 60 & 5.488 & 6.443 & 0.955 & 0.585 & $9.3\times 10^{-7}$ & -0.310 & $[0.410,\ 0.717]$ \\
\hline
\end{tabular}
}
\end{table}

\subsection{RBPF diagnostics}
The main text focuses on scalar summaries of switching ambiguity and curvature (e.g., $H_c$ and trace-based curvature proxies).
As a basic sanity check for the RBPF computations underlying these summaries (Section~\ref{sec:s:rbpf_louis_details}), we optionally inspect time series diagnostics that probe particle degeneracy and switching uncertainty.

\paragraph{Effective sample size (ESS).}
We monitor the effective sample size $\mathrm{ESS}_t$ defined in Eq.~\eqref{eq:rbpf_ess}. Resampling is triggered when $\mathrm{ESS}_t < \tau_{\mathrm{ESS}}N_p$, where the threshold $\tau_{\mathrm{ESS}}$ is reported in Table~\ref{tab:s:saem_hparams}.

\paragraph{Resampling events.}
We record the time indices at which resampling occurs. Frequent resampling is expected in highly informative (low-noise) regimes but can also indicate proposal mismatch. Conversely, persistently low ESS without resampling indicates degeneracy.

\paragraph{Time-resolved switching-code entropy.}
At each transition time, we compute the filtering-distribution switching uncertainty via the full-code entropy $H_{t,\mathrm{code}}$ (Section~\ref{sec:s:rbpf_louis_details}). We report this trace in bits, i.e. we plot
\begin{equation}
H^{(\mathrm{bits})}_{t,\mathrm{code}} := H_{t,\mathrm{code}}/\log 2.
\end{equation}
This provides a time-resolved view of ambiguity that complements the time-averaged summary $H_c$.

Figure~\ref{fig:s:rbpf_diagnostics} illustrates these optional diagnostics on a representative SAEM evaluation window (window length $L=32$). All RBPF settings for this run match Table~\ref{tab:s:saem_hparams}. This visualization serves to exemplify that the reported entropy and curvature summaries are not dominated by particle collapse artifacts.

\begin{figure}[t]
\centering
\includegraphics[width=1.0\textwidth]{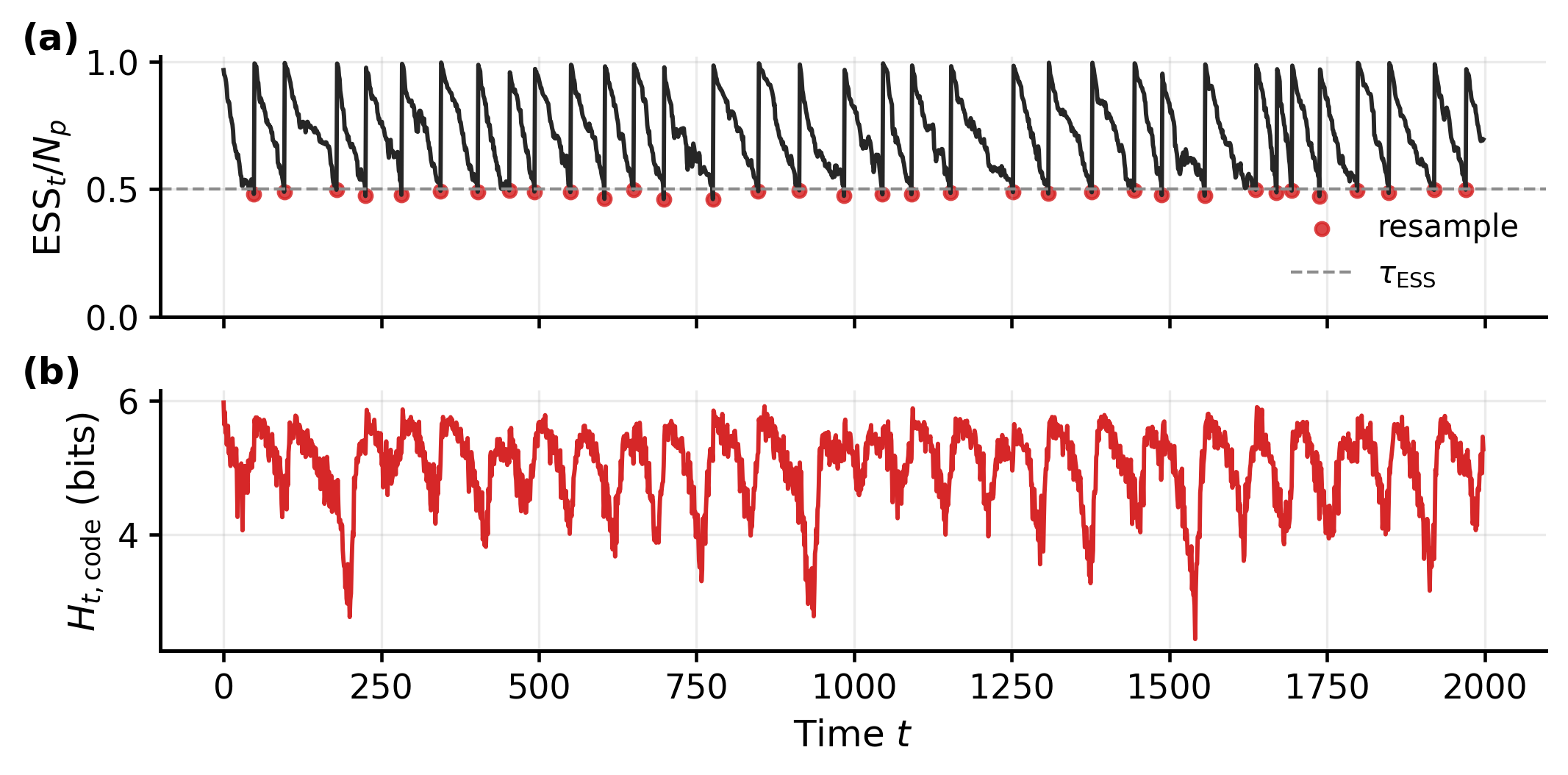}
\caption{RBPF diagnostics on a representative SAEM evaluation window. Panels: \textbf{(a)} normalized ESS ($\mathrm{ESS}_t/N_p$) with resampling events (triggered when $\mathrm{ESS}_t<\tau_{\mathrm{ESS}}N_p$); \textbf{(b)} full-code filtering entropy in bits, $H^{(\mathrm{bits})}_{t,\mathrm{code}}:=H_{t,\mathrm{code}}/\log 2$. These diagnostics help confirm that particle degeneracy is controlled under the chosen RBPF budget and that the switching-uncertainty summaries (e.g., $H_c$) reflect switching ambiguity rather than particle collapse.}
\label{fig:s:rbpf_diagnostics}
\end{figure}

\subsection{Matrix-aware mismatch diagnostics (anisotropy and eigenspace misalignment)}
\label{sec:s:matrix_mismatch}
The main text focuses on scalar (trace-based) curvature summaries because they are robust and easy to interpret. Figure~\ref{fig:s:matrix_mismatch} complements this view by probing whether the mismatch between the ITF-aligned curvature proxy $\bm{\mathcal I}_{\mathrm{ITF}}$ (Section~\ref{sec:s:tf_fim}) and the ambiguity-aware Louis observed information estimate $\Iohat$ (Section~\ref{sec:s:rbpf_louis_details}) is primarily a uniform rescaling, or whether it is anisotropic and/or accompanied by eigenspace misalignment.

Throughout, we compare matrices on the same segment and on a \emph{per-step} scale. Let
\begin{equation}
\Iostephat := \Iohat/T.
\end{equation}
Recall that $\bm{\mathcal I}_{\mathrm{ITF}}$ already includes the per-step normalization from the ITF loss (Section~\ref{sec:s:tf_fim}). To stabilize the Monte Carlo Louis estimator at finite $N_p,S$, we apply ridge stabilization and symmetrize the respective matrices before computing the following diagnostics. Let $p$ be the parameter dimension of the block under study (here, the drift parameters $(\bm A,\bm W,\bm h)$ with $p=M^2+2M$) and define
\begin{equation}
\mu := \varepsilon\,\frac{\tr(\Iostephat)+\tr(\bm{\mathcal I}_{\mathrm{ITF}})}{2p},\qquad \varepsilon=10^{-6}.
\end{equation}
We then set
\begin{align}
\Iostephat^{(\mu)}&:=\tfrac12(\Iostephat+\Iostephat^{\mathsf T})+\mu\bm I,\notag\\
\bm{\mathcal I}_{\mathrm{ITF}}^{(\mu)}&:=\tfrac12\big(\bm{\mathcal I}_{\mathrm{ITF}}+\bm{\mathcal I}_{\mathrm{ITF}}^{\mathsf T}\big)+\mu\bm I.
\end{align}

\paragraph{Log-det gap $\Delta_{\log\det}$.}
We report the ridge-stabilized log-determinant difference in decades,
\begin{equation}
\Delta_{\log\det}
:=
\frac{\log\det\big(\bm{\mathcal I}_{\mathrm{ITF}}^{(\mu)}\big)-\log\det\big(\Iostephat^{(\mu)}\big)}{\log 10}.
\end{equation}
This is a global (volume) mismatch summary that is sensitive to anisotropy, not only to trace scaling.

\paragraph{Direction-wise curvature ratios via generalized eigenvalues $q_{0.9}^{\gamma}$.}
To characterize curvature ratios in a direction-dependent way, we consider the generalized eigenproblem
\begin{equation}
\label{eq:gen_eig}
\bm{\mathcal I}_{\mathrm{ITF}}^{(\mu)}\bm u_i = \gamma_i\,\Iostephat^{(\mu)}\bm u_i,
\qquad i=1,\dots,p,
\end{equation}
equivalently $\{\gamma_i\}_{i=1}^p=\mathrm{eig}((\Iostephat^{(\mu)})^{-1}\bm{\mathcal I}_{\mathrm{ITF}}^{(\mu)})$.
Here $\gamma_i$ compares the ITF curvature to the ambiguity-aware observed information along the generalized-eigenvector direction $\bm u_i$; $\gamma_i>1$ indicates a direction that is sharper under ITF than under the marginal likelihood geometry. We summarize the spectrum by quantiles of $\log_{10}\gamma_i$ and define the shorthand
\begin{equation}
q_{\alpha}^{\gamma}
:=
\mathrm{Quantile}_{\alpha}\!\big(\{\log_{10}\gamma_i\}_{i=1}^p\big),\qquad \alpha\in(0,1).
\end{equation}
In Fig.~\ref{fig:s:matrix_mismatch}(b) we report the 90th percentile summary $q_{0.9}^{\gamma}$, which captures the upper end of the curvature ratio distribution and is sensitive to the presence of directions that are much sharper under ITF than under the marginal likelihood geometry. Note that $q_{0.9}^{\gamma}>0$ indicates that at least 10\% of the generalized eigenvalues $\gamma_i$ exceed 1, i.e. there are nontrivial directions along which the ITF curvature is sharper than the ambiguity-aware observed information. Conversely, $q_{0.9}^{\gamma}<0$ indicates that at least 90\% of the $\gamma_i$ are below 1, i.e. most directions are sharper under the marginal likelihood geometry than under ITF.

\paragraph{Top-$k$ sensitive-subspace overlap $\mathrm{ov}_{k}$.}
Finally, we compare the leading eigenspaces of $\bm{\mathcal I}_{\mathrm{ITF}}^{(\mu)}$ and $\Iostephat^{(\mu)}$. Let $\bm U_{\mathrm{ITF}},\bm U_{\mathrm{obs}}\in\mathbb{R}^{p\times k}$ collect the top-$k$ eigenvectors of $\bm{\mathcal I}_{\mathrm{ITF}}^{(\mu)}$ and $\Iostephat^{(\mu)}$, respectively. We define
\begin{equation}
\mathrm{ov}_k := \frac{1}{k}\,\|\bm U_{\mathrm{ITF}}^{\mathsf T}\bm U_{\mathrm{obs}}\|_F^2\in[0,1],
\end{equation}
the mean squared cosine of principal angles between the two $k$-dimensional subspaces (since the eigenvectors can be chosen orthonormal, the singular values of $\bm U_{\mathrm{ITF}}^{\mathsf T}\bm U_{\mathrm{obs}}$ equal $\cos\theta_i$ and $\|\cdot\|_F^2$ sums $\cos^2\theta_i$).
In Fig.~\ref{fig:s:matrix_mismatch}(c) we use $k=50$ and denote the reported overlap by $\mathrm{ov}_{50}$.

\paragraph{Interpretation (Fig.~\ref{fig:s:matrix_mismatch}).}

Across the same checkpoints and fixed-$\sigma_{\mathrm{proc}}$ segments used in Fig.~\ref{fig:results}b, the matrix-aware summaries show that $\Delta_{\log\det}$ is negative and $q_{0.9}^{\gamma}<0$, meaning that at least $90\%$ of generalized-eigenvalue directions have $\gamma_i<1$. Thus, most directions are not sharper under ITF. The mismatch is instead anisotropic and concentrated: trace-level gaps can coexist with broad direction-wise shrinkage, while the leading-subspace overlap remains low and decreases with switching entropy. We interpret these diagnostics as evidence for subspace-dependent geometry mismatch instead of broad direction-wise overconfidence of ITF.

\begin{figure}[t]
\centering
\includegraphics[width=\linewidth]{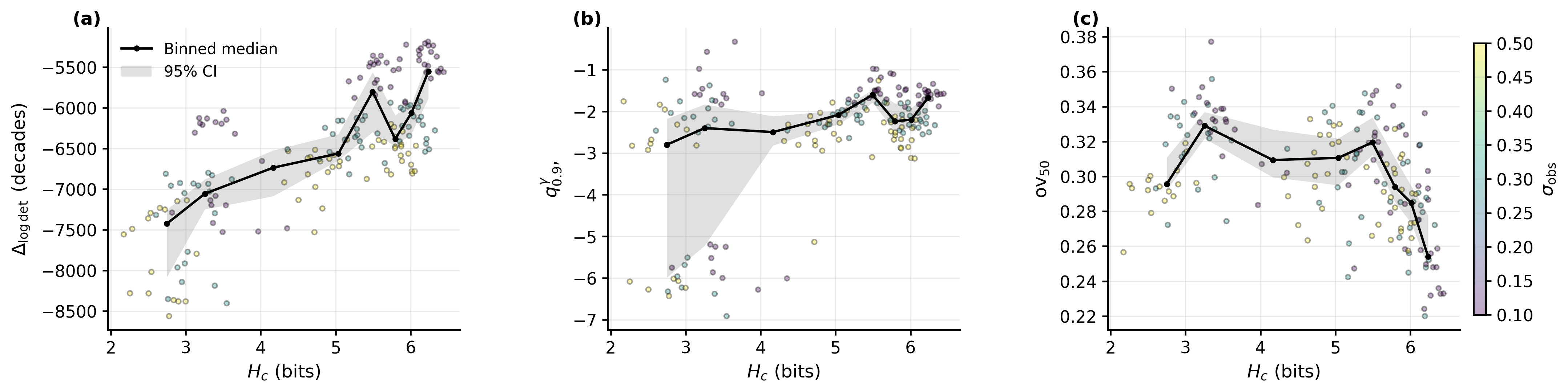}
\caption{Matrix-aware mismatch diagnostics comparing ITF curvature $\bm{\mathcal I}_{\mathrm{ITF}}$ to ambiguity-aware observed information $\Iohat$ on the same segments, stratified by fixed process noise $\sigma_{\mathrm{proc}}$. Points are colored by observation-noise level $\sigma_{\mathrm{obs}}$. \textbf{(a)} Ridge-stabilized log-det gap in decades, $\Delta_{\log\det}$. \textbf{(b)} Upper-quantile direction-wise ratio, $q_{0.9}^{\gamma}$, where $\gamma_i$ are generalized eigenvalues of $(\Iostephat^{(\mu)})^{-1}\bm{\mathcal I}_{\mathrm{ITF}}^{(\mu)}$ (Eq.~\eqref{eq:gen_eig}). \textbf{(c)} Leading-subspace overlap, $\mathrm{ov}_{50}$, between the top-$50$ eigenspaces of $\bm{\mathcal I}_{\mathrm{ITF}}^{(\mu)}$ and $\Iostephat^{(\mu)}$.}
\label{fig:s:matrix_mismatch}
\end{figure}

\begin{figure}[t]
\centering
\includegraphics[width=1.0\textwidth]{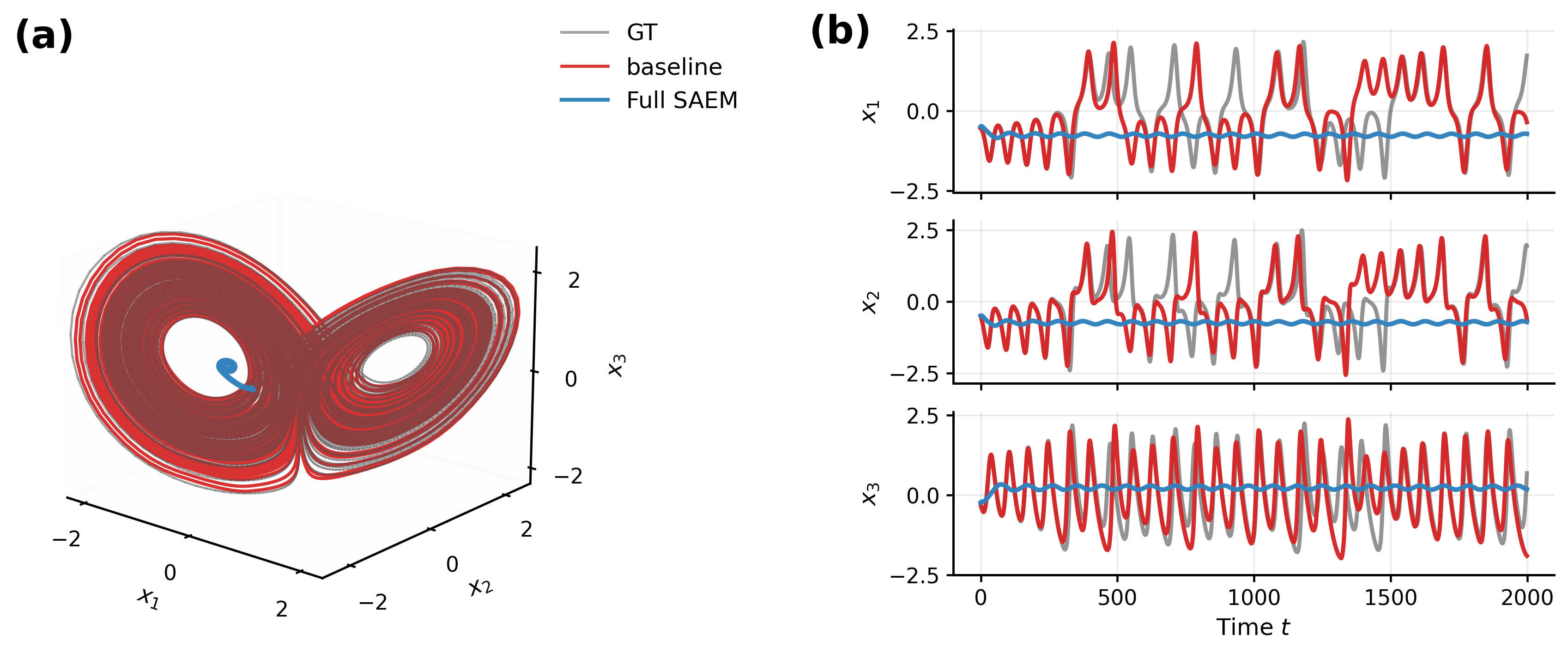}
\caption{Qualitative attractor reconstruction visualization for a representative AL-RNN initialization checkpoint and its SAEM-fine-tuned PAL-RNN counterpart (``\texttt{full SAEM}'' in Fig.~\ref{fig:results}c). Rollouts are length $T=5\cdot10^4$ time steps and initialized at the first ground-truth state. \textbf{(a)} Attractor reconstruction. Baseline closely matches the ground truth (GT) attractor geometry, while the deterministic rollout of the SAEM-fine-tuned model collapses to a strongly contracted periodic orbit. \textbf{(b)} Zoomed-in view of the trajectories, with GT, baseline, and SAEM-fine-tuned rollouts initialized at the same point to highlight sensitive dependence on initial conditions: while the baseline trajectory (necessarily) diverges in phase from the GT reference, it still captures the overall attractor geometry.}
\label{fig:s:attractor_overlays}
\end{figure}
\subsection{Qualitative attractor reconstruction visualization}
As a qualitative complement to the scalar QoI metrics in Section~\ref{sec:s:repro_and_metrics}, we visualize attractor reconstruction for a representative AL-RNN initialization checkpoint and its SAEM-fine-tuned PAL-RNN counterpart (``\texttt{full SAEM}'' in Fig.~\ref{fig:results}c). Figure~\ref{fig:s:attractor_overlays} complements the scalar QoI metrics in Fig.~\ref{fig:results}c and illustrates that improving windowed evidence need not preserve the qualitative attractor geometry.

\subsection{Computational complexity considerations}
\label{sec:s:complexity}
Let $T$ be the sequence length, $M$ the latent dimension, $N$ the observation dimension, and $p$ the parameter dimension of the drift block (here $p=M^2+2M$ for $\bm A,\bm W,\bm h$).
Computing the ITF Fisher proxy in closed form (Section~\ref{sec:s:tf_fim}) requires propagating sensitivities $\bm S_t\in\mathbb{R}^{M\times p}$ and accumulating a $p\times p$ outer product. 
In a straightforward implementation, this costs on the order of
\begin{equation}
\mathcal O\big(T\,M\,p + T\,N\,p + T\,p^2\big) = \mathcal O(T\,M^4)\quad\text{(for fixed $N$)}
\end{equation}
time, with memory $\mathcal O(Mp + p^2)=\mathcal O(M^4)$ if the full matrix is stored. Thus the cost is dominated by the number of latent dimensions $M$ of the model.

Considering the estimation of $\Iohat$ via RBPF/Louis (Section~\ref{sec:s:rbpf_louis_details}), with $N_p$ particles and $S$ approximate smoothing trajectories, the cost scales roughly as
\begin{equation}
\mathcal O\big((N_p+S)\,T\,\mathrm{KF}(M)\big),
\end{equation}
where $\mathrm{KF}(M)$ denotes the per-step cost of the conditional linear-Gaussian update ($\mathcal O(M^3)$ in the worst case, i.e. for dense covariances).

\section{Limitations and future work}
\label{sec:s:limitations}
All curvature quantities used here ($\bm{\mathcal I}_{\mathrm{ITF}}$ and $\Io$) are inherently \emph{local} summaries around a parameter value. Interpreting them as posterior precision (e.g., via Laplace/Bernstein--von Mises-type arguments) requires additional regularity and, at least locally, approximately Gaussian behavior of the objective. In non-convex switching models, the \emph{global} geometry can differ substantially due to multimodality and flat directions, so curvature is best read as a diagnostic of local sensitivity rather than a complete description of posterior shape.

Further, our observed-information and fine-tuning results rely on (i) Monte Carlo RBPF/Louis approximations for the PAL-RNN, (ii) a windowed conditional evidence objective, and (iii) experiments on Lorenz--63 initialized from ITF-trained AL-RNN checkpoints. These choices bound the scope of the conclusions.

First, the evidence objective in Fig.~\ref{fig:results}c is computed on fixed-length windows as $\log p_{\bm\theta}(\bm x_{t+1:t+L}\mid \bm x_t)$ (Section~\ref{sec:s:saem_details}), rather than as a full-sequence marginal likelihood for a single long free-running trajectory. Consequently, improving windowed evidence is not guaranteed to improve long-horizon dynamical QoIs, and can favor parameter changes that improve short-window predictability while altering invariant structure. This misalignment is visible empirically when drift parameters are updated (\texttt{full SAEM}). We view windowing as a pragmatic stabilization of likelihood estimation for chaotic, long-horizon sequences, not as a substitute for QoI-aligned training.

Second, the RBPF/Louis observed-information estimates are Monte Carlo approximations whose accuracy depends on particle/smoothing budgets, resampling behavior, and the chosen PAL-RNN augmentation (Section~\ref{sec:s:rbpf_louis_details}). In more nonlinear regimes or at longer horizons, these estimators can suffer from particle degeneracy or high variance; stability under increased budgets and alternative proposals is therefore an important practical consideration (Fig.~\ref{fig:s:rbpf_diagnostics}).

Third, the PAL-RNN uses simplified noise structure (isotropic Gaussian process and observation noise, and fixed gate-noise scale $\sigma_g$). Extending the analysis to richer noise models (e.g., anisotropic or state-dependent noise), to learning or calibrating $\sigma_g$, and to partial-observation settings would improve realism and help identify which aspects of the geometry mismatch persist.

Fourth, the dynamical QoIs reported for particle-SAEM fine-tuning are evaluated on deterministic, hard-gated rollouts (noise set to zero after thresholding the learned gate probabilities). This isolates drift dynamics for comparability, but it does not evaluate invariants of the full stochastic PAL-RNN implied by the fitted noise parameters. A natural next step is to estimate QoIs directly from sampled free-running PAL-RNN trajectories (and their induced stationary distributions), and to compare drift-only invariants to their distributional/stochastic counterparts.

On the inference/geometry side, it would be valuable to develop scalable ambiguity-aware curvature summaries beyond trace proxies (e.g., leading-eigenspace estimates or structured approximations) and to connect them more directly to downstream goals such as uncertainty quantification/calibration, robust fine-tuning, and optimal experimental design. Extending the empirical comparison beyond strict ITF to GTF (e.g., sweeping forcing strength $\alpha$ in Eq.~(\ref{eq:a0:gtf_mix})) would further clarify how intervention policies interpolate between stable training geometry and ambiguity-aware likelihood geometry.

Finally, the ITF-aligned Fisher proxy $\bm{\mathcal I}_{\mathrm{ITF}}$ exploits AL-RNN structure to yield a closed-form, training-aligned curvature summary under a specific intervention policy. Extending such intervention-aligned curvature proxies to other surrogate classes and downstream applications is a promising direction.

\end{document}